\tikzstyle{process} = [rectangle, rounded corners, minimum width=3cm, minimum height=1cm,text centered, draw=black, fill=orange!45]
\tikzstyle{guarantee} = [rectangle, rounded corners, minimum width=3cm, minimum height=1cm,text centered, draw=black, fill=blue!15]
\tikzstyle{transform} = [rectangle, rounded corners, minimum width=3cm, minimum height=1cm,text centered, draw=black, line width=0.5mm, fill=red!30]
\tikzstyle{arrow} = [thick,->,>=stealth]
\renewcommand\footnotetextcopyrightpermission[1]{}
\begin{document}
\title{Provably Fair Representations}

\author{Daniel McNamara}
\affiliation{%
  \institution{The Australian National University and CSIRO Data61}
}
\email{daniel.mcnamara@anu.edu.au}

\author{Cheng Soon Ong}
\affiliation{%
  \institution{The Australian National University and CSIRO Data61}
}
\email{chengsoon.ong@anu.edu.au}

\author{Robert C. Williamson}
\affiliation{%
  \institution{The Australian National University and CSIRO Data61}
}
\email{bob.williamson@anu.edu.au}

\begin{abstract}
Machine learning systems are increasingly used to make decisions about people's lives, such as whether to give someone a loan or whether to interview someone for a job. This has led to considerable interest in making such machine learning systems fair. One approach is to transform the input data used by the algorithm. This can be achieved by passing each input data point through a representation function prior to its use in training or testing. Techniques for learning such representation functions from data have been successful empirically, but typically lack theoretical fairness guarantees. We show that it is possible to prove that a representation function is fair according to common measures of both group and individual fairness, as well as useful with respect to a target task. These provable properties can be used in a governance model involving a data producer, a data user and a data regulator, where there is a separation of concerns between fairness and target task utility to ensure transparency and prevent perverse incentives. We formally define the `cost of mistrust' of using this model compared to the setting where there is a single trusted party, and provide bounds on this cost in particular cases. We present a practical approach to learning fair representation functions and apply it to financial and criminal justice datasets. We evaluate the fairness and utility of these representation functions using measures motivated by our theoretical results.
\end{abstract}

\maketitle

\section{Introduction}
\label{introduction}
Machine learning algorithms play an ever-increasing role in society. Such algorithms are widely used to make decisions about individuals, such as whether to grant someone a loan or what advertisements to serve someone online. Other relevant examples are automated applicant screening used by employers and educational institutions, automated assignment grading, and algorithmic advice on criminal sentencing.\footnote{Some other related notions of fairness, such as compliance by search engines with antitrust legislation, may also be readily incorporated within our framework.} A risk of this trend is that such algorithms may be unfair in some way, for example by discriminating against particular groups. Even if not intended by the algorithm designer, discrimination is possible because the reasoning behind the algorithm's decisions is often difficult for humans to interpret. Furthermore, artefacts of previous discrimination present in the algorithm's training data may increase this tendency in the algorithm's decisions. 

To remove or minimize discrimination effects caused by the use of machine learning systems, a fairness objective may be incorporated into algorithm design. This approach benefits users of such algorithms, particularly those in social groups that are potentially the subject of discrimination on grounds such as race or gender. Such an approach also assists companies and organisations deploying machine learning systems in ensuring regulatory compliance. Moreover, as rapid technological progress drives disruptive social change and in turn resistance to such change, a focus on fairness will be required to maintain the `social license to operate' \cite{morrison2014social} of companies using such algorithms.\footnote{For example, the European Union has proposed `right to explanation' laws scheduled to commence in 2018, allowing a user to ask for an explanation of an algorithmic decision made about them. \cite{goodman2016eu}}

We consider the task of producing a \textit{decision variable} (e.g. whether to grant a loan) which predicts a \textit{target variable} of interest (e.g. loan default), while at the same time avoiding discrimination on the basis of an individual's group membership (e.g. race, gender) encoded in a \textit{sensitive variable}. We adopt the approach of passing input data about individuals through a representation function, so that subsequent use of the cleaned data will not be able to discriminate based on the sensitive variable. By incorporating fairness as a data pre-processing step, we achieve a separation of concerns between fairness and target task utility which offers governance and regulatory advantages.

Our work provides theoretical guarantees for several performance measures of interest:
\begin{enumerate}
\item \textbf{Group fairness}: Using common measures of the similarity of decisions for one group compared to another, we show that it is possible for a representation function to improve group fairness.
\item \textbf{Individual fairness}: We quantify the effect the representation function has on the assignment of similar decisions to individuals who are similar.
\item \textbf{Target utility}: We quantify the cost a representation function incurs in terms of target variable utility.
\item \textbf{Cost of mistrust}: We quantify the cost of this separation of concerns, relative to a setting where the decision-making party is given full access to the sensitive variable. 
\end{enumerate}

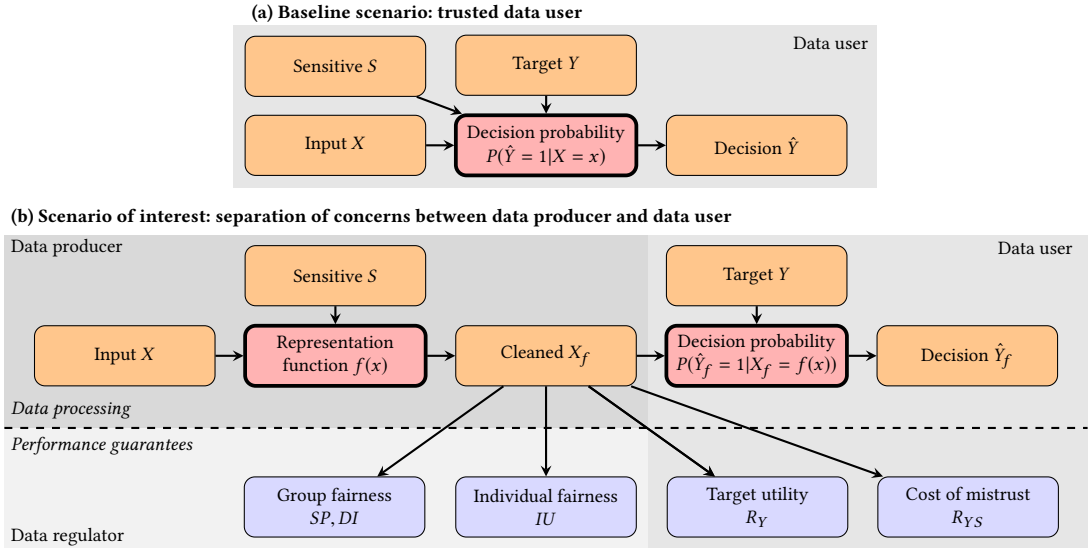
\begin{figure*}

\centering
\begin{tikzpicture}[node distance=3.5cm,scale=0.8,transform shape]

\draw[fill=gray!30,draw=none] (-2,-1.2) rectangle (8.7,2);
\draw[fill=gray!20,draw=none] (8.7,-3.2) rectangle (16,2);
\draw[fill=gray!10,draw=none] (-2,-3.2) rectangle (8.7,-1.2);
\node[align=center] (data) [process] {Input $X$};
\node[align=center] (f) [transform,right of = data] {Representation \\ function $f(x)$};
\node[align=center] (sensitive) [process,above = 0.3cm of f] {Sensitive $S$};
\node (cleaned) [process,right of = f] {Cleaned $X_f$};
\node[align=center] (probability) [transform,right of = cleaned] {Decision probability \\ $P(\hat{Y}_f=1|X_f=f(x))$};
\node (predictions) [process,right of = probability] {Decision $\hat{Y}_f$};
\node (labels) [process,above = 0.3 cm of probability] {Target $Y$};
\node [align=center](group_fairness) [guarantee,below left = 1.5cm and 0.5cm of cleaned] {Group fairness \\ $SP, DI$};
\draw [arrow] (data) -- node[anchor=south,align=center, text width=2cm] {} (f);
\draw [arrow] (sensitive) -- node[anchor=south,align=center, text width=2cm] {} (f);
\draw [arrow] (f) -- node[anchor=south,align=center, text width=2cm] {} (cleaned);
\draw [arrow] (cleaned) -- node[anchor=north east, shift={(-0.7,-0.1)}] {} (group_fairness);
\draw [arrow] (cleaned) -- node[anchor=south,align=center, text width=3cm] {} (probability);
\draw [arrow] (probability) -- node[anchor=south,align=center, text width=3cm] {} (predictions);
\node[align=center] (individual_fairness) [guarantee,below=1.5cm of cleaned] {Individual fairness \\ $IU$};
\node[align=center] (accuracy) [guarantee,below right = 1.5cm and 0.5cm of cleaned] {Target utility \\ $R_Y$};
\node[align=center] (combined) [guarantee,below right = 1.5cm and 4cm of cleaned] {Cost of mistrust \\ $R_{YS}$};
\draw [arrow] (cleaned) -- node[anchor=north west,shift={(0.7,-0.1)}] {} (accuracy);
\draw [arrow] (cleaned) -- node[anchor=north east,shift={(0.1,-0.1)}] {} (individual_fairness);
\draw [arrow] (cleaned) -- node[anchor=west] {} (accuracy);
\draw [arrow] (cleaned) -- node[anchor=north west,shift={(1,-0.1)}] {} (combined);
\draw [arrow] (labels) -- node[anchor=south] {} (probability);
\draw [dashed,thick] (-2,-1.2) -- (16,-1.2);
\node[align=left,anchor=west] at (-2,-0.9) {\textit{Data processing}};
\node[align=left,anchor=west] at (-2,-1.5) {\textit{Performance guarantees}};
\node[align=left,anchor=west] at (-2,1.8) {Data producer};
\node[align=left,anchor=west] at (14.4,1.8) {Data user};
\node[align=left,anchor=west] at (-2,-3) {Data regulator};
\node[align=left,anchor=west] at (-2,2.3) {\textbf{(b) Scenario of interest: separation of concerns between data producer and data user}};
\node[align=left,anchor=west] at (2,5.7) {\textbf{(a) Baseline scenario: trusted data user}};
\draw[fill=gray!20,draw=none] (1.8,2.8) rectangle (12.5,5.5);
\node[align=left,anchor=west] at (11,5.2) {Data user};
\node (predictions_original) [process,above of = probability] {Decision $\hat{Y}$};
\node[align=center] (probability_original) [transform,left of = predictions_original] {Decision probability \\ $P(\hat{Y}=1|X=x)$};
\node (labels_original) [process,above = 0.3 cm of probability_original] {Target $Y$};
\node (input_original) [process,above of = f] {Input $X$};
\node (sensitive_original) [process,above = 0.3cm of input_original] {Sensitive $S$};
\draw [arrow] (labels_original) -- node[anchor=south] {} (probability_original);
\draw [arrow] (input_original) -- node[anchor=south,align=center, text width=3cm] {} (probability_original);
\draw [arrow] (probability_original) -- node[anchor=south,align=center, text width=3cm] {} (predictions_original);
\draw [arrow] (sensitive_original) -- node[anchor=north west,shift={(1,-0.1)},align=center, text width=2cm] {} (probability_original);
\end{tikzpicture}

\caption{Problem setting for provably fair representations. We show train time procedures for (a) a baseline scenario when the data user is trusted with access to the sensitive variable and (b) the scenario of interest when the data user is only able to access cleaned data prepared by the data producer. Test time procedures for both scenarios are discussed in Section \ref{introduction}. The performance guarantees achieved in scenario (b) are described in Section \ref{theory}.}
\label{conceptual_fig}
\end{figure*}

 Previous work on fairness in machine learning has mostly considered the case where there is a single trusted \textit{data user} \cite{menon2017cost,zafar2017fairness,edwards_censoring_2015,zemel_learning_2013}. This baseline scenario is shown in Figure \ref{conceptual_fig}(a). The data user learns decision variable $\hat{Y}$ by training on samples of input variable $X$, target variable $Y$ and sensitive variable $S$. We assume that $S$, $Y$ and $\hat{Y}$ are binary variables, a restriction which still allows us to work on many problems of interest, as we show. $X$ may be discrete or continuous. At test time, on input $X=x$, the data user makes decision $P(\hat{Y}=1|X=x)$. We assume that $S$ is not available at test time. However, this is not overly restrictive since $X$ may be constructed to perfectly predict $S$ (e.g. $S$ is one of the attributes included in $X$).

In our scenario of interest, shown in Figure \ref{conceptual_fig}(b), the data user is treated as \textit{untrusted}. This may be appropriate in a setting where the data user's incentives are aligned to target task utility, possibly at the expense of fairness. Our scenario of interest involves three parties: a \textit{data producer} who prepares the input data, a \textit{data user} who makes decisions from the data, and a \textit{data regulator} who oversees fair use of the data. For example, in the context of deciding whether to give an individual a loan, the data producer might be a credit bureau, the data user a bank and the data regulator a government authority. Even if the data producer and data user (and potentially even the data regulator) are the same organization, this conceptual framework provides improved transparency.

We consider the case where the data user only receives access to cleaned data prepared by the data producer, in order to ensure the fairness of the decisions made. The data producer passes the input variable $X$ through a representation function $f$ to produce cleaned variable $X_f$, which is then made available to the data user who learns decision variable $\hat{Y}_f$ using $X_f$ and $Y$.  We assume $\hat{Y}_f$ is also binary. We assume $X_f$ has the same domain as $X$, which means that the cleaned data is interpretable using existing domain knowledge. The representation function $f$ is learned by the data producer from samples of the input variable $X$ and sensitive variable $S$. At test time, on input $X=x$, the data producer computes $X_f=f(x)$, after which the data user makes decision $P(\hat{Y}_f=1|X_f \nobreak = \nobreak f(x))$.

The data producer computes performance guarantees (as described above) about \textit{group fairness} and \textit{individual fairness} to the data regulator and about \textit{target task utility} and the \textit{cost of mistrust} to the data user. In practice, the statistics required for the guarantees must be estimated from a finite data sample.

It is not necessary for the data producer to have access to target variable labels $Y$ which may be used by the data user for supervised learning. This is a realistic situation in the case where the labels are constructed using proprietary transactions (e.g. previous loans) conducted by the data user. Furthermore, assuming that $Y$ is not modified ensures that the data user still has an incentive to accurately predict these labels. The properties of $f$ we guarantee hold for any target task, despite the fact that $Y$ may not be accessible when $f$ is learned.

The remainder of the paper is structured as follows. In Section \ref{background} we introduce related work. In Section \ref{theory} we present our theoretical results. In Section \ref{practical} we propose a practical algorithm for learning fair representation functions, apply it to real-world financial and criminal justice datasets, and analyze the key performance measures suggested by our theoretical results. We conclude and present ideas for future work in Section \ref{conclusion}. We defer proofs of our theorems to Section \ref{proofs}.

\section{Background}
\label{background}
Fairness in machine learning has emerged as a significant field of research in recent years \cite{datta_automated_2015,hardt_equality_2016,feldman2015certifying,zliobaite_survey_2015}. A seminal study providing theoretical foundations for fairness in machine learning \cite{dwork_fairness_2012} proposed two notions of fairness. \textit{Group fairness} can be defined, roughly, as similar (or at least more similar than a baseline\footnote{For example, an affirmative action system may introduce quotas which do not aim for `equality of outcome' but rather `reduced inequality of outcome'.}) decisions for one group compared to another. This is the type of fairness we focus on improving. The \textit{individual fairness} of a machine learning algorithm can be defined, roughly, as the assignment of similar decisions to individuals who are similar by some metric. The two notions of fairness are potentially in tension with each other: group fairness achieves \textit{equal outcomes} for each group regardless of the characteristics of the individuals that make up the groups, while individual fairness provides individuals who are similar with \textit{equal treatment} regardless of their group membership. We are interested in the problem of improving group fairness, without too great a cost to individual fairness or to performance on the target task of interest.

A common approach to measuring group fairness is to compare the decision variable probabilities conditioned on different values of the sensitive variable. The difference in these probabilities was proposed as \textit{statistical parity}\cite{dwork_fairness_2012} and is commonly in use \cite{edwards_censoring_2015,menon2017cost}. The ratio of these probabilities, known as \textit{disparate impact}, is favoured in certain applications such as the 80 percent rule advocated by the US Equal Employment Opportunity Commission \cite{eeoc} and has also been studied in the literature \cite{feldman2015certifying,menon2017cost}. It may be appropriate to require group fairness only on some subset of the input data \cite{calmon2017optimized}, such as requiring equal employment outcomes for men and women with suitable qualifications rather than for all men and women. Disparate mistreatment, which compares the proportion of the incorrect decisions for different values of the sensitive variable, was proposed by \cite{hardt_equality_2016} in the context of a critique of previous notions of group fairness. However, as \cite{zafar2017fairness} points out, when historical decisions are biased, measures such as statistical parity and disparate impact which are independent of these decisions may be more suitable. We focus on these two measures of group fairness but expect that the appropriate choice of fairness measure will be problem-dependent.

There are several approaches to improving the group fairness of machine learning algorithms. For example, it is possible to embed a fairness criterion in the supervised learning objective function \cite{menon2017cost,zafar2017fairness} or post-process the outputs of a classifier \cite{hardt_equality_2016}. We adopt the approach of passing input data about individuals through a representation function, so that subsequent use of the cleaned data will enforce group fairness \cite{zemel_learning_2013}. This approach allows the processes of achieving fairness and making accurate predictions to be separated. Moving responsibility for fairness to a third party may provide better alignment of roles and incentives to ensure fairness is given sufficient attention in contexts where target task utility is the primary focus. The pre-processing approach is common in privacy-preserving data mining, where the data is first `sanitized' before any attempt to make predictions from it \cite{aggarwal2008general}. 

Naively, we might expect that removing the sensitive variable describing an individual's group membership is sufficient. However, the problem of `redundant encodings' means that the sensitive variable may be readily predicted from other features in the data. A classic example is the practice of `redlining', where the neighbourhood in which loan applicants live is used as a proxy for discrimination on the basis of race or class \cite{dwork_fairness_2012}. A solution is to pass the input data through a representation function, which is learned by encoding a desirable fairness property in an objective function which is maximised over training data \cite{zemel_learning_2013}. Modifying both the inputs and the target labels has also been proposed \cite{calmon2017optimized}. However, modifying the target label is problematic since there may be no incentive for a data user interested in target task utility to attempt to accurately predict the target label if it has been modified. We therefore focus on modifying the input data only.

The first technique proposed for fair representations \cite{zemel_learning_2013} learns a transformation of each input data point into a distribution over cluster centers. A target task linear classifier using the cleaned data is learned simultaneously, using an objective function containing terms for each of individual fairness, group fairness and target task utility. Subsequently the `adversarial' approach \cite{edwards_censoring_2015,beutel2017data} to learning fair representations was developed. This uses a neural network to learn a representation function such that an adversary network predicting the sensitive variable from the cleaned data has low accuracy, while another network predicting the target variable using the cleaned data has high accuracy. Other works have proposed learning a representation function such that using the cleaned data, conditioning on different sensitive variable values yields similar distributions \cite{louizos_variational_2015,feldman2015certifying,johndrow2017algorithm}. For example the `variational fair autoencoder' \cite{louizos_variational_2015} matches the means of the distributions. A drawback of previous empirical approaches to learning fair representations is that they provide no theoretical guarantees that information about the sensitive variable in the cleaned data cannot `leak' into a subsequent decision variable, relying only on the fact that \textit{some particular learner} failed to be unfair using the cleaned data.  In fact, a data user focused only on target task utility may have a \textit{perverse incentive} to use a weak learner which makes the representation function appear fair. This problem highlights the need for provably fair representation functions. 

\section{Theoretical guarantees for fair and useful representations}
\label{theory}
We propose theoretical guarantees for learning representation functions which improve group fairness, while preserving individual fairness and target task utility.

\subsection{Guaranteeing group fairness}
\label{group_fairness}
We would like to guarantee that a representation function $f$ achieves group fairness if used to transform $X$ to $X_f$. To formalize group fairness, we first consider statistical parity \cite{dwork_fairness_2012}, which measures the difference in outcomes for different sensitive variable groups.
\begin{definition}[Statistical parity]
Decision variable $\hat{Y}$ and sensitive variable $S$ have statistical parity 
\begin{equation*}
SP(\hat{Y},S):=p(\hat{Y}=1|S=1)-p(\hat{Y}=1|S=0).
\end{equation*}
\end{definition}

We also consider the related notion of normalized disparate impact, which measures the ratio of outcomes for different sensitive variable groups.\footnote{Disparate impact defined as $\frac{p(\hat{Y}=1|S=0)}{p(\hat{Y}=1|S=1)}$ is commonly used, although we prefer normalized disparate impact so that like statistical parity, smaller values are fairer.} 

\begin{definition}[Normalized disparate impact]
Decision variable $\hat{Y}$ and sensitive variable $S$ have normalized disparate impact
\begin{equation*}
DI(\hat{Y},S):=1-\frac{p(\hat{Y}=1|S=0)}{p(\hat{Y}=1|S=1)}.
\end{equation*}
\end{definition}

We assume $p(\hat{Y}=1|S=1)\geq p(\hat{Y}=1|S=0)$ without loss of generality, since the assignment of sensitive variable groups to the values $0$ and $1$ is arbitrary. Therefore $SP(\hat{Y},S), DI(\hat{Y},S) \in [0,1]$ where $0$ is perfectly fair and $1$ is maximally unfair.

\subsubsection{Statistical parity guarantee}

The statistical parity of a decision variable is closely related to the balanced error rate with respect to the sensitive variable, as observed in \cite{menon2017cost}.

\begin{definition}[Balanced error rate]
Decision variable $\hat{Y}$ has balanced error rate with respect to sensitive variable $S$ defined as
\begin{equation*}
BER(\hat{Y},S):=\frac{1}{2}p(\hat{Y}=1|S=0)+\frac{1}{2}p(\hat{Y}=0|S=1).
\end{equation*}
\end{definition}

While in our problem setting the role of the decision variable is to predict the target variable $Y$ rather than $S$, an adversary may instead make decisions using predictions of $S$. We may tightly upper bound the statistical parity of any decision made using $X_f$ using the minimum achievable balanced error rate, as shown in Theorem \ref{group_fairness_theorem_simplified}.  The relationship between $BER$ and $SP$ for a \textit{particular} classifier was previously observed by \cite{menon2017cost}, although our result provides an $SP$ guarantee for \textit{any} classifier using $X_f$.

\begin{theorem}[Tight bound on statistical parity]
\label{group_fairness_theorem_simplified}

For all decisions $\hat{Y}_f$ made using $X_f$ as input, 
\begin{equation*}
SP(\hat{Y}_f,S)\leq 1-2BER(\hat{Y}_f^{BER},S).
\end{equation*}
The bound is tight when $\hat{Y}_f=\hat{Y}_f^{BER}$, where 
\begin{equation*}
\hat{Y}_f^{BER}(x):=\mathbf{1}(p(S=1|X_f=x)\geq p(S=1)).
\end{equation*}
\end{theorem}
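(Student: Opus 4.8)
The plan is to reduce the statement to two facts: (i) an exact identity $SP(\hat Y_f,S) = 1-2\,BER(\hat Y_f,S)$ valid for \emph{every} decision rule that is a function of $X_f$, and (ii) the claim that $\hat Y_f^{BER}$ minimises $BER(\cdot,S)$ over all such rules. Granting (i) and (ii), the bound is immediate: for any $\hat Y_f$ we have $SP(\hat Y_f,S) = 1 - 2\,BER(\hat Y_f,S) \le 1 - 2\,BER(\hat Y_f^{BER},S)$, and equality holds when $\hat Y_f = \hat Y_f^{BER}$ by (i), which gives tightness.

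Fact (i) is a one-line computation from the definitions: expanding $BER(\hat Y_f,S) = \tfrac12 p(\hat Y_f=1|S=0) + \tfrac12 p(\hat Y_f=0|S=1)$ and using $p(\hat Y_f=0|S=1) = 1 - p(\hat Y_f=1|S=1)$ yields $1-2\,BER(\hat Y_f,S) = p(\hat Y_f=1|S=1) - p(\hat Y_f=1|S=0) = SP(\hat Y_f,S)$. Under the standing convention $p(\hat Y_f=1|S=1)\ge p(\hat Y_f=1|S=0)$ this is nonnegative, so in particular $BER(\hat Y_f,S)\le \tfrac12$; if some rule violates the convention one applies the argument to $1-\hat Y_f$, which has the same value of $|SP|$. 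Note this identity recovers, for each fixed classifier, the relationship observed by \cite{menon2017cost}.

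For fact (ii), the key observation is that $BER$ is exactly the misclassification probability of $\hat Y_f$, viewed as a predictor of $S$, under the \emph{balanced} distribution $\tilde p$ that retains the class-conditionals $p(X_f\mid S)$ but sets $\tilde p(S=0)=\tilde p(S=1)=\tfrac12$: indeed $\tilde p(\hat Y_f\neq S) = \tfrac12 p(\hat Y_f=1|S=0)+\tfrac12 p(\hat Y_f=0|S=1) = BER(\hat Y_f,S)$. Since $\hat Y_f$ ranges over all (possibly randomised) functions of $X_f$, and $BER$ is affine in the decision probabilities $P(\hat Y_f=1\mid X_f=x)$, the minimiser is the Bayes-optimal classifier for $S$ given $X_f$ under $\tilde p$, i.e.\ $x\mapsto \mathbf{1}(\tilde p(S=1\mid X_f=x)\ge \tfrac12)$. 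It then remains to check that this threshold rule coincides with $\hat Y_f^{BER}$: Bayes' rule under $\tilde p$ (equal priors) gives $\tilde p(S=1\mid X_f=x)\ge \tfrac12 \iff p(X_f=x\mid S=1)\ge p(X_f=x\mid S=0)$, Bayes' rule under the original $p$ gives $p(S=1\mid X_f=x)\ge p(S=1) \iff p(X_f=x\mid S=1)\ge p(X_f=x)$, and substituting $p(X_f=x)=p(X_f=x\mid S=1)p(S=1)+p(X_f=x\mid S=0)p(S=0)$ shows both are equivalent to the likelihood-ratio condition $p(X_f=x\mid S=1)\ge p(X_f=x\mid S=0)$. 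For continuous $X_f$ the same steps go through with densities and integrals, and ties (equality in the threshold) do not change the value of $BER$.

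The main obstacle is fact (ii): one must be careful to quantify over \emph{all} decision rules measurable with respect to $X_f$ (not merely a particular learner --- this is exactly the content of the theorem), to justify that randomised rules cannot improve on deterministic ones (affineness of $BER$ in the decision probabilities), and to verify cleanly that the innocuous-looking threshold $p(S=1\mid X_f=x)\ge p(S=1)$ is in fact the Bayes rule for the reweighted prediction problem. Everything else is bookkeeping with the definitions.
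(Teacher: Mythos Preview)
Your proposal is correct and follows essentially the same route as the paper: establish the identity $SP(\hat Y_f,S)=1-2\,BER(\hat Y_f,S)$ for any $\hat Y_f$, then invoke that $\hat Y_f^{BER}$ minimises $BER$ over all decisions based on $X_f$. The only difference is that the paper dispatches fact~(ii) by citation to \cite{zhao_beyond_2013}, whereas you supply a self-contained argument via the balanced-prior Bayes classifier and verify the threshold equivalence; this adds rigour but does not change the strategy.
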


\subsubsection{Alternative statistical parity guarantee using conditional entropy}

Applying Theorem \ref{group_fairness_theorem_simplified} involves computing the decision variable which minimizes balanced error rate when predicting $S$ from $X_f$. This involves making decisions by thresholding the distribution $p(S=1|X_f=x)$. Because of the discontinuity introduced by this thresholding function, the method may be sensitive to errors in the estimation of $p(S=1|X_f=x)$ and $P(S=1)$ from a finite sample. We present an alternative upper bound on statistical parity, which avoids both the thresholding function and the need to estimate $P(S=1)$ by instead estimating the conditional entropy of $S$ given $X_f$.

We introduce two definitions from information theory which we will require for our statistical parity guarantee.

\begin{definition}[Binary entropy]
For a binary variable with success probability $p$, the binary entropy is 
\begin{equation*}
H_b(p):=-p\log_2 p - (1-p)\log_2(1-p).
\end{equation*}
\end{definition}

\begin{definition}[Conditional entropy]
For a binary variable $S$ and input variable $X$, the conditional entropy
\begin{equation*}
H(S|X):=\mathbb{E}_x[H_b(p(S=1|X=x))].
\end{equation*}
\end{definition}

In Theorem \ref{group_fairness_theorem} we upper bound statistical parity using $H(S|X_f)$. Hence, making $H(S|X_f)$ large ensures that \textit{any} decision-making procedure using $X_f$ will have small statistical parity.

\begin{theorem}[Bound on statistical parity using conditional entropy]
\label{group_fairness_theorem}
Let $H_b^{-1}$ be the inverse of $H_b(p)$ where $p \in [0,\frac{1}{2}]$. For all decisions $\hat{Y}_f$ made using $X_f$ as input, 
\begin{equation*}
SP(\hat{Y}_f,S)\leq 1-\frac{H_b^{-1}(H(S|X_f))}{\max(p(S=1),p(S=0))}.
\end{equation*}
\end{theorem}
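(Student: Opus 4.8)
The plan is to reduce the bound to Fano's inequality for the binary variable $S$. First I would express the left-hand side through the posterior $q(x):=p(S=1|X_f=x)$ and the (possibly randomized) decision rule $g(x):=p(\hat{Y}_f=1|X_f=x)$. Since $\hat{Y}_f$ is produced from $X_f$ alone, $\hat{Y}_f$ and $S$ are conditionally independent given $X_f$, hence $p(\hat{Y}_f=1,S=s)=\mathbb{E}_x[g(x)\,p(S=s|X_f=x)]$; dividing by $p(S=s)$ and subtracting the two cases gives
\begin{equation*}
SP(\hat{Y}_f,S)=\mathbb{E}_x\left[g(x)\cdot\frac{q(x)-p(S=1)}{p(S=0)\,p(S=1)}\right].
\end{equation*}
Because $g(x)\in[0,1]$, maximizing pointwise inside the expectation gives $g^{\star}(x)=\mathbf{1}(q(x)>p(S=1))$ — which is exactly the threshold classifier $\hat{Y}_f^{BER}$ of Theorem \ref{group_fairness_theorem_simplified} — and therefore
\begin{equation*}
SP(\hat{Y}_f,S)\le\frac{1}{p(S=0)\,p(S=1)}\,\mathbb{E}_x\left[(q(x)-p(S=1))^{+}\right].
\end{equation*}
Since $\mathbb{E}_x[q(x)]=p(S=1)$, this upper bound is unchanged under swapping the two labels of $S$, matching the paper's convention.

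Next I would relate this quantity to the Bayes error $P_e:=\mathbb{E}_x[\min(q(x),1-q(x))]$ of predicting $S$ from $X_f$. The identity $\mathbb{E}_x[q(x)]=p(S=1)$ also gives $\mathbb{E}_x[(q-p(S=1))^{+}]=\mathbb{E}_x[(p(S=1)-q)^{+}]$, so either form may be substituted. Splitting into the cases $p(S=1)\ge p(S=0)$ and $p(S=1)<p(S=0)$ — and using $q>p(S=1)\ge\tfrac{1}{2}$ in the first, $q<p(S=1)\le\tfrac{1}{2}$ in the second — a pointwise comparison, combined after integration with the identities $\mathbb{E}_x[(2q-1)^{+}]=p(S=1)-P_e$ and $\mathbb{E}_x[(1-2q)^{+}]=p(S=0)-P_e$, yields
\begin{equation*}
\frac{1}{p(S=0)\,p(S=1)}\,\mathbb{E}_x\left[(q(x)-p(S=1))^{+}\right]\le 1-\frac{P_e}{\max(p(S=1),p(S=0))}.
\end{equation*}

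Finally I would invoke Fano's inequality: since $S$ is binary, $H(S|X_f)\le H_b(P_e)$, and as $P_e\le\tfrac{1}{2}$ and $H_b^{-1}$ (the inverse on $[0,\tfrac{1}{2}]$) is increasing, this rearranges to $P_e\ge H_b^{-1}(H(S|X_f))$; combining with the previous display finishes the proof. I expect the middle step to be the main obstacle: getting the factor $\max(p(S=1),p(S=0))$ into the right place forces the case distinction, and the pointwise inequality only goes through once $(q-p(S=1))^{+}$ is rewritten via the mean-zero identity in the case $p(S=0)>p(S=1)$. The conditional-independence and pointwise-maximization argument of the first step, and the appeal to Fano in the last step, are routine by comparison.
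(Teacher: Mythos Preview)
Your argument is correct, and the route differs in a useful way from the paper's. Both proofs start from the same maximal-$SP$ bound (your pointwise maximization of $g$ is exactly Theorem~\ref{group_fairness_theorem_simplified}), and both split into the cases $p(S=1)\ge\tfrac12$ and $p(S=1)<\tfrac12$. The divergence is in the bridge to conditional entropy. The paper works with the \emph{balanced} error rate $BER(\hat{Y}_f^{BER},S)$, invokes an external result (Corollary~8 of \cite{zhao_beyond_2013}) to bound $H(S|\hat{Y}_f)$ in terms of $BER$, and then applies the Data Processing Inequality to pass from $H(S|\hat{Y}_f)$ to $H(S|X_f)$. You instead work with the ordinary Bayes error $P_e=\mathbb{E}_x[\min(q,1-q)]$, establish the pointwise inequalities $(q-\pi)^{+}\le(1-\pi)(2q-1)^{+}$ (when $\pi\ge\tfrac12$) and $(\pi-q)^{+}\le\pi(1-2q)^{+}$ (when $\pi<\tfrac12$), integrate using the identities $\mathbb{E}_x[(2q-1)^{+}]=\pi-P_e$ and $\mathbb{E}_x[(1-2q)^{+}]=(1-\pi)-P_e$, and finish with the binary Fano inequality $H(S|X_f)\le H_b(P_e)$ applied directly to the pair $(S,X_f)$. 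Your version is more self-contained: it avoids the detour through a particular classifier $\hat{Y}_f$ and the DPI step, and it replaces the cited corollary by an explicit one-line pointwise check. The paper's version has the virtue of making the connection to $BER$ (and hence to Theorem~\ref{group_fairness_theorem_simplified}) more visible.
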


\subsubsection{Normalized disparate impact guarantee}

We may tightly upper bound the normalized disparate impact of any decision made using $X_f$, as shown in Theorem \ref{DI_minimiser}.  

\begin{theorem}[Tight bound on normalized disparate impact]
\label{DI_minimiser}
Let $\eta_f:=\underset{x}{\max\text{ }}p(S=1|X_f=x)$. Let
\begin{equation*}
\hat{Y}_f^{DI}(x):=
\begin{cases} 
      \gamma & \text{if $p(S=1|X_f=x)=\eta_f$,} \\
      0 & \text{otherwise.}
   \end{cases}
\end{equation*}
where $\gamma$ is an arbitrary positive constant. For all decisions $\hat{Y}_f$ made using $X_f$ as input, 
\begin{equation*}
DI(\hat{Y}_f,S)\leq 1-\frac{p(S=1)(1-\eta_f)}{p(S=0)\eta_f}.
\end{equation*}
The bound is tight when $\hat{Y}_f=\hat{Y}_f^{DI}$.
\end{theorem}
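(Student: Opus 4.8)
The plan is to reduce the claim to a pointwise comparison of posterior odds, then integrate. First I would fix notation: for any $\hat{Y}_f$ depending on the data only through $X_f$, write $g(x):=p(\hat{Y}_f=1|X_f=x)\in[0,1]$ for the induced (possibly randomized) decision rule, $\alpha(x):=p(S=1|X_f=x)$ for the posterior, and $\mu$ for the marginal law of $X_f$. By the tower rule, $p(\hat{Y}_f=1,S=1)=\mathbb{E}_{X_f}[g(X_f)\alpha(X_f)]$ and $p(\hat{Y}_f=1,S=0)=\mathbb{E}_{X_f}[g(X_f)(1-\alpha(X_f))]$, while $p(S=1)=\mathbb{E}_{X_f}[\alpha(X_f)]$ and $p(S=0)=\mathbb{E}_{X_f}[1-\alpha(X_f)]$. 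Dividing the two conditional decision probabilities and cancelling, one finds that the target inequality is equivalent to
\[
\frac{\mathbb{E}_{X_f}[g(X_f)(1-\alpha(X_f))]}{\mathbb{E}_{X_f}[g(X_f)\alpha(X_f)]}\ \ge\ \frac{1-\eta_f}{\eta_f}
\]
(assuming the denominator is nonzero, i.e. $p(\hat{Y}_f=1|S=1)>0$; I handle the degenerate case separately).

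The key step is then a one-line pointwise bound. Since $\alpha(x)\le\eta_f$ for every $x$ by definition of $\eta_f$, and $t\mapsto (1-t)/t$ is decreasing on $(0,1]$, we get $(1-\alpha(x))\ge\frac{1-\eta_f}{\eta_f}\,\alpha(x)$ for $\mu$-almost every $x$. Multiplying by the nonnegative weight $g(x)$ and integrating against $\mu$ yields $\mathbb{E}_{X_f}[g(1-\alpha)]\ge\frac{1-\eta_f}{\eta_f}\,\mathbb{E}_{X_f}[g\alpha]$, which is exactly the displayed inequality; rearranging recovers the stated bound on $DI(\hat{Y}_f,S)$.

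For tightness I would plug in $\hat{Y}_f=\hat{Y}_f^{DI}$, i.e. $g(x)=\gamma\,\mathbf{1}(\alpha(x)=\eta_f)$. Then both $\mathbb{E}_{X_f}[g(1-\alpha)]$ and $\mathbb{E}_{X_f}[g\alpha]$ are carried by the set $\{x:\alpha(x)=\eta_f\}$, on which the pointwise inequality above is an equality; the constant $\gamma$ and the mass of that set cancel in the ratio, so equality propagates all the way to the bound on $DI$.

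Rather than a deep obstacle, the main care needed is with boundary cases: if $p(\hat{Y}_f=1|S=1)=0$ the ratio in $DI$ is undefined and the statement is vacuous; if $\eta_f=1$ the bound is the trivial $DI\le1$; and tightness requires $\eta_f$ to be attained on a set of positive $\mu$-measure, which is implicit in writing $\max$ rather than $\sup$. (It is also worth noting $\eta_f\ge\mathbb{E}_{X_f}[\alpha(X_f)]=p(S=1)$, which guarantees the bound lies in $[0,1]$.) None of these requires real work; the substantive content is the posterior-odds comparison plus a single integration.
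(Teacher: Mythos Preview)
Your proposal is correct and follows essentially the same approach as the paper. Both arguments reduce to the single pointwise bound $\alpha(x)\le\eta_f$ on the posterior and then integrate; the paper applies this bound separately to the numerator (via $1-\alpha\ge 1-\eta_f$) and the denominator (via $\alpha\le\eta_f$) before cancelling the common integral, whereas you cross-multiply first into the equivalent odds inequality $(1-\alpha)\ge\frac{1-\eta_f}{\eta_f}\alpha$ and integrate once---but this is a cosmetic difference, not a different idea. Your explicit handling of the degenerate cases (undefined ratio, $\eta_f=1$, and the positive-measure requirement for tightness) is a welcome addition.
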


Previous work has provided upper bounds on $DI$ for a \textit{particular} decision using balanced error rate and the related notion of cost sensitive risk \cite{feldman2015certifying,menon2017cost}. However, our result applies to \textit{any} decision made using $X_f$ as input. Furthermore, $p(\hat{Y}_f^{DI}=0|S=1)\approx 1$, $p(\hat{Y}_f^{DI}=1|S=0)\approx 0$ and so $BER(\hat{Y}_f^{DI},S)\approx \frac{1}{2}$ regardless of $X_f$, while if $\eta_f \approx 1$ then $DI(\hat{Y}_f^{DI},S)\approx 1$. Hence ensuring $BER(\hat{Y}_f,S)$ is large for all $\hat{Y}_f$ is not sufficient to guarantee that $DI(\hat{Y}_f,S)$ is small for all $\hat{Y}_f$. However, we \textit{can} guarantee $DI(\hat{Y}_f)$ will be small for all $\hat{Y}_f$ if $\eta_f$ is not too large. Furthermore we observe that because the proof of Theorem \ref{group_fairness_theorem} relies on the particular relationship of $SP$ and $BER$, there is no immediate analog to this theorem in the case of $DI$.

\subsection{Guaranteeing individual fairness}
\label{individual_fairness}

We introduce the notion of individual fairness from \cite{dwork_fairness_2012}, which roughly states that if two individuals are similar then a similar decision should be made about them. Another term used for this type of (un)fairness is disparate treatment \cite{zafar2017fairness}. This is familiar mathematically as a smoothness property.

\begin{definition}[Individual fairness \cite{dwork_fairness_2012}]
\label{individual_fairness_definition}
Decision variable $\hat{Y}$ is $D,d$ individually fair with respect to input variable $X$ iff $\forall x,x'$, $D(p(\hat{Y}=1|X=x),p(\hat{Y}=1|X=x'))\leq d(x,x')$, where $D$ and $d$ are subadditive functions.
\end{definition}

We also give a quantitative notion of individual \textit{unfairness} by measuring the probability that a pair of randomly selected individuals will be treated fairly.

\begin{definition}[Individual unfairness]
Decision variable $\hat{Y}$ and input variable $X$ have $D,d$ individual unfairness
\begin{equation*}
IU_{D,d}(\hat{Y},X):=p[D(p(\hat{Y}=1|X=x),p(\hat{Y}=1|X=x'))> d(x,x')]
\end{equation*}
where $x$ and $x'$ are independent random samples of $X$.
\end{definition}

We would like to understand the effect on individual fairness of moving from $X$ to $X_f$. We assume the motivation for using $X_f$ is group rather than individual fairness. However, we would like to ensure that group fairness is not achieved at too great a cost to individual fairness. To do this we introduce the following definition.

\begin{definition}[Large reconstruction error rate]
Let $\epsilon$ be a constant and $x$ be a random sample of $X$. Let $p(d(x,f(x))> \epsilon)$ be the large reconstruction error rate.
\end{definition}

In Theorem \ref{individual_fairness_theorem} we provide a result guaranteeing that if $X_f$ approximately reconstructs $X$ for most individuals, as measured by the large reconstruction error rate $p(d(x,f(x))> \epsilon)$, then a decision variable that is individually fair using $X$ can be used to construct a decision variable that is not too individually unfair using $X_f$.

\begin{theorem}[Guarantee for individual fairness]
\label{individual_fairness_theorem}
Suppose decision variable $\hat{Y}$ is individually fair with respect to $X$ and the large reconstruction error rate $p(d(x,f(x))> \epsilon)\leq \delta$. Let $\hat{Y}_f$ be the random variable formed by setting $p(\hat{Y}_f=1|X_f=x)=p(\hat{Y}=1|X=x)$. Let $d_\epsilon(x,x'):=d(x,x')+2\epsilon$. Then $IU_{D,d_\epsilon}(\hat{Y}_f,X)\leq 2\delta$.
\end{theorem}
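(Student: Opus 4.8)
The plan is a two-term union bound over a ``bad reconstruction'' event, glued together by the triangle inequality for $d$. Write $g(x):=p(\hat{Y}_f=1|X=x)$ for the decision the pipeline assigns to an individual with original input $x$; since $\hat{Y}_f$ depends on $X$ only through $X_f=f(X)$, we have $g(x)=p(\hat{Y}_f=1|X_f=f(x))$, which by the construction of $\hat{Y}_f$ equals $p(\hat{Y}=1|X=f(x))$. The quantity to bound is $IU_{D,d_\epsilon}(\hat{Y}_f,X)=p[D(g(x),g(x'))>d_\epsilon(x,x')]$ for independent samples $x,x'$ of $X$.

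First I would condition on the event $A:=\{d(x,f(x))\le\epsilon\}\cap\{d(x',f(x'))\le\epsilon\}$ that reconstruction succeeds for both sampled individuals. Since $x$ and $x'$ are identically distributed, the hypothesis $p(d(x,f(x))>\epsilon)\le\delta$ and the union bound give $p(A^{c})\le 2\delta$. It then suffices to prove that $A$ forces the ``fair'' inequality $D(g(x),g(x'))\le d_\epsilon(x,x')$, i.e. that the individual-unfairness event is contained in $A^{c}$; the conclusion $IU_{D,d_\epsilon}(\hat{Y}_f,X)\le p(A^{c})\le 2\delta$ then follows at once.

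Second, on $A$ I would chain two inequalities. Applying the hypothesis that $\hat{Y}$ is $D,d$ individually fair with respect to $X$ to the pair of points $f(x),f(x')$ gives $D(p(\hat{Y}=1|X=f(x)),p(\hat{Y}=1|X=f(x')))\le d(f(x),f(x'))$, which is exactly $D(g(x),g(x'))\le d(f(x),f(x'))$. Then, using subadditivity (the triangle inequality) and symmetry of $d$ twice, $d(f(x),f(x'))\le d(f(x),x)+d(x,x')+d(x',f(x'))\le d(x,x')+2\epsilon=d_\epsilon(x,x')$, where the second step uses that $A$ holds. Combining the two displays yields $D(g(x),g(x'))\le d_\epsilon(x,x')$ on $A$, as required.

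The argument involves no estimation or concentration: every inequality is a deterministic pointwise statement, and the only bookkeeping subtlety is that the metric chain produces an additive slack of exactly $2\epsilon$ (one $\epsilon$ from each sampled point's reconstruction error), which is precisely why $d_\epsilon$ is defined as $d+2\epsilon$. I would flag explicitly that this uses a triangle-inequality and symmetry property of $d$ — slightly stronger than bare subadditivity in one variable — consistent with the ``metric'' role $d$ plays in the definition of \cite{dwork_fairness_2012}; notably, the proof never invokes subadditivity of $D$.
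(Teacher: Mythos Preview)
Your proposal is correct, and the overall architecture---union bound over a bad-reconstruction event, plus a deterministic chain on the good event---matches the paper. The one genuine difference is \emph{where} you place the triangle inequality. The paper applies the triangle inequality to $D$ (twice), inserting the intermediate points $p(\hat{Y}=1|X=x)$ and $p(\hat{Y}=1|X=x')$, and then invokes individual fairness of $\hat{Y}$ on three separate pairs $(f(x),x)$, $(x,x')$, $(x',f(x'))$ to collapse each $D$-term into a $d$-term bounded by $\epsilon$, $d(x,x')$, and $\epsilon$ respectively. You instead invoke individual fairness once, directly on the pair $(f(x),f(x'))$, obtaining $D(g(x),g(x'))\le d(f(x),f(x'))$, and then apply the triangle inequality to $d$. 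Your route is a little shorter and, as you note, never uses subadditivity of $D$; the paper's route never uses the triangle inequality for $d$ (though both routes tacitly use symmetry of $d$ to match $d(f(x),x)$ with the hypothesis on $d(x,f(x))$). Since Definition~\ref{individual_fairness_definition} assumes both $D$ and $d$ are subadditive, either chain is licensed; your version simply spends a different part of the hypothesis.
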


\subsection{Guaranteeing target task utility}
\label{low_regret}
We would also like to guarantee that using $X_f$ does not have too great an impact on target task utility compared to using $X$. We first define a measure of target task utility, known as risk. 

\begin{definition}[Target variable risk]
\label{original_risk_definition}
Decision variable $\hat{Y}$ has risk with respect to target variable $Y$ defined as 
\begin{equation*}
R_Y(\hat{Y}):=\mathbb{E}_x[D(p(\hat{Y}=1|X=x),p(Y=1|X=x))]
\end{equation*}

where $D$ is a subadditive function.
\end{definition}

We would like to quantify the effect on risk of using a decision variable $\hat{Y}_f$ using $X_f$ as input rather than decision variable $\hat{Y}$ using $X$ as input. To do this we introduce the following definition.

\begin{definition}[Average reconstruction error]
Let the average reconstruction error incurred by $f$ for input variable $X$ be 
\begin{equation*}
\mathbb{E}_x[d(x,f(x))].
\end{equation*}
\end{definition}

We would like to guarantee that if $X_f$ approximately reconstructs $X$,  measured using the average reconstruction error $\mathbb{E}_x[d(x,f(x))]$, then $R_Y(\hat{Y}_f)$ is not too much larger than $R_Y(\hat{Y})$. We show this in Theorem \ref{accuracy_theorem}. The theorem requires a smoothness property of $\hat{Y}$, which is equivalent to our previous definition of individual fairness. Our result applies to any such $\hat{Y}$, so that we do not need to know $\hat{Y}$ in order to provide the guarantee.

\begin{theorem}[Guarantee for target task utility]
\label{accuracy_theorem}
Suppose decision variable $\hat{Y}$ is individually fair with respect to $X$ and has risk $R_Y(\hat{Y})$ with respect to target variable $Y$. Let $\hat{Y}_f$ be the decision variable formed by setting $p(\hat{Y}_f=1|X_f=x)=p(\hat{Y}=1|X=x)$.
 Then 
\begin{equation*}
R_Y(\hat{Y}_f)\leq R_Y(\hat{Y})+\mathbb{E}_x[d(x,f(x))].
\end{equation*}
\end{theorem}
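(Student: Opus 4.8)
The plan is to reduce the statement to a one-line application of the triangle-inequality property of the subadditive function $D$, once the quantity $R_Y(\hat{Y}_f)$ has been written out explicitly. First I would unwind the construction of $\hat{Y}_f$. At test time the data user observes $X_f=f(x)$ when the underlying individual is $x$, and by definition $p(\hat{Y}_f=1|X_f=x)=p(\hat{Y}=1|X=x)$; substituting $f(x)$ for $x$ shows that the decision probability assigned to individual $x$ is $p(\hat{Y}_f=1|X_f=f(x))=p(\hat{Y}=1|X=f(x))$. Since risk compares this decision probability against $p(Y=1|X=x)$, this yields
\begin{equation*}
R_Y(\hat{Y}_f)=\mathbb{E}_x\big[D\big(p(\hat{Y}=1|X=f(x)),\,p(Y=1|X=x)\big)\big].
\end{equation*}
Pinning down this identity is really the only conceptual content; everything afterwards is mechanical.

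Next, working pointwise in $x$, I would insert the intermediate value $p(\hat{Y}=1|X=x)$ and apply subadditivity of $D$:
\begin{align*}
D\big(p(\hat{Y}=1|X=f(x)),p(Y=1|X=x)\big)&\leq D\big(p(\hat{Y}=1|X=f(x)),p(\hat{Y}=1|X=x)\big)\\
&\quad+D\big(p(\hat{Y}=1|X=x),p(Y=1|X=x)\big).
\end{align*}
The second term is exactly the integrand defining $R_Y(\hat{Y})$. For the first term I would invoke the hypothesis that $\hat{Y}$ is individually fair with respect to $X$ (Definition \ref{individual_fairness_definition}), applied to the input pair $(x,f(x))$: this gives $D\big(p(\hat{Y}=1|X=x),p(\hat{Y}=1|X=f(x))\big)\leq d(x,f(x))$, where $d$ is the function witnessing individual fairness, and since the bound holds for every ordered pair of inputs the argument order inside $D$ is immaterial. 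Taking $\mathbb{E}_x[\cdot]$ of the pointwise inequality and using linearity of expectation then gives $R_Y(\hat{Y}_f)\leq \mathbb{E}_x[d(x,f(x))]+R_Y(\hat{Y})$, which is the claim.

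The main obstacle, such as it is, is the bookkeeping in the first step: one must be careful that the risk of $\hat{Y}_f$ is measured against $p(Y=1|X=x)$ for the \emph{true} input $x$, while the decision itself is computed from the reconstructed point $f(x)$, so that it is precisely the reconstruction error $d(x,f(x))$ that bridges the two. After that the proof is a standard triangle-inequality estimate, and since the statement concerns population quantities no finite-sample or estimation considerations enter. This template also makes transparent why individual fairness of $\hat{Y}$ is the natural hypothesis here: it is exactly what converts a small reconstruction error in input space into a small perturbation of the decision probability.
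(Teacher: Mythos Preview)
Your proposal is correct and follows essentially the same route as the paper's proof: both first identify $p(\hat{Y}_f=1|X=x)=p(\hat{Y}=1|X=f(x))$ (the paper derives this in the proof of Theorem~\ref{individual_fairness_theorem} and then cites it), apply the triangle inequality for $D$ to split off the $R_Y(\hat{Y})$ term, and bound the remaining term by $d(x,f(x))$ via individual fairness of $\hat{Y}$. The only cosmetic difference is the order in which the identity and the triangle inequality are applied.
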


\subsection{Guaranteeing the cost of mistrust}
\label{combined_guarantee}

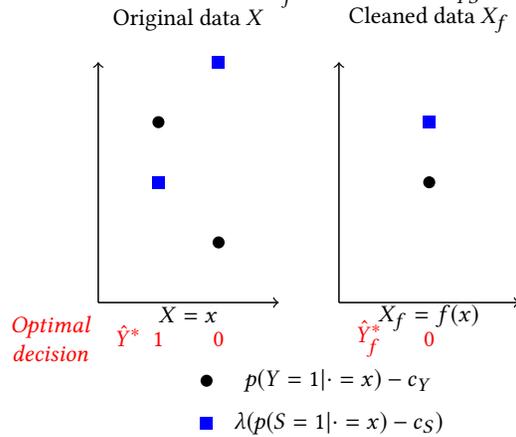
\begin{figure}

\centering
\begin{tikzpicture}[scale=0.8]

\node at (1.5,4.5){Original data $X$};
\node at (1.5,-0.2){$X=x$};
\path [draw=none,fill=black] (1,3) circle (0.1);
\path [draw=none,fill=blue] (0.9,1.9) rectangle (1.1,2.1);
\path [draw=none,fill=black] (2,4) circle (0.1);
\path [draw=none,fill=blue] (1.9,0.9) rectangle (2.1,1.1);
\draw [->,line width=0.2mm] (0,0) -- (3,0);
\draw [->,line width=0.2mm] (0,0) -- (0,4);
\node[red,text width=1.5cm, align=center] at (-0.8,-0.6){\textit{Optimal decision}};
\node[red] at (0.5,-0.55){$\hat{Y}^*$};
\node[red] at (1,-0.6){$1$};
\node[red] at (2,-0.6){$1$};
\node at (-1.2,5)[align=left,anchor=west]{(a) No cost: $R_{YS}(\hat{Y}^*_f)-R_{YS}(\hat{Y}^*)=0$};
\draw [-,dashed,gray,line width=0.2mm] (-1.2,-1.2) -- (7.4,-1.2);

\node at (5.5,-0.25){$X_f=f(x)$};
\path [draw=none,fill=black] (5.5,3.5) circle (0.1);
\path [draw=none,fill=blue] (5.4,1.4) rectangle (5.6,1.6);
\draw [->,line width=0.2mm] (4,0) -- (7,0);
\draw [->,line width=0.2mm] (4,0) -- (4,4);
\node[red] at (4.5,-0.6){$\hat{Y}^*_f$};
\node[red] at (5.5,-0.6){$1$};

\node at (5.5,4.5){Cleaned data $X_f$};

\node at (1.5,-2.3){Original data $X$};
\node at (5.5,-2.3){Cleaned data $X_f$};
\node at (1.5,-7.2){$X=x$};
\path [draw=none,fill=black] (1,-4) circle (0.1);
\path [draw=none,fill=blue] (0.9,-5.1) rectangle (1.1,-4.9);
\path [draw=none,fill=black] (2,-6) circle (0.1);
\path [draw=none,fill=blue] (1.9,-3.1) rectangle (2.1,-2.9);
\draw [->,line width=0.2mm] (0,-7) -- (3,-7);
\draw [->,line width=0.2mm] (0,-7) -- (0,-3);
\node[red] at (1,-7.6){$1$};
\node[red] at (2,-7.6){$0$};
\node[red] at (0.5,-7.55){$\hat{Y}^*$};
\node[red,text width=1.5cm, align=center] at (-0.8,-7.6){\textit{Optimal decision}};
\node at (5.5,-7.25){$X_f=f(x)$};
\path [draw=none,fill=black] (5.5,-5) circle (0.1);
\path [draw=none,fill=blue] (5.4,-4.1) rectangle (5.6,-3.9);
\draw [->,line width=0.2mm] (4,-7) -- (7,-7);
\draw [->,line width=0.2mm] (4,-7) -- (4,-3);
\node[red] at (5.5,-7.6){$0$};
\node[red] at (4.5,-7.6){$\hat{Y}^*_f$};

\path [draw=none,fill=black] (1.8,-8.3) circle (0.1);
\node at (4,-8.3){$p(Y=1|\cdot=x)-c_Y$};
\path [draw=none,fill=blue] (1.7,-9.1) rectangle (1.9,-8.9);
\node at (4,-9){$\lambda(p(S=1|\cdot=x)-c_S)$};
\node at (-1.2,-1.8)[align=left,anchor=west]{(b) Maximum cost: $R_{YS}(\hat{Y}^*_f)-R_{YS}(\hat{Y}^*)=R_{YS}^{\max}-R_{YS}(\hat{Y}^*)$};

\end{tikzpicture}

\caption{Illustration of $R_{YS}(\hat{Y}^*_f)-R_{YS}(\hat{Y}^*)$, the \textit{cost of mistrust}. Assume the distribution over $X$ is equally concentrated on two points, and that $f$ maps all points to the same arbitrary value of $X_f$. The vertical axis shows scaled conditional probabilities of $Y$ and $S$ using inputs $X$ and $X_f$. The values of the optimal decisions minimizing $R_{YS}$, $\hat{Y}^*$ on input $X$ and $\hat{Y}_f^*$ on input $X_f$, are shown below each plot. $R_{YS}$ is the combined risk described in Definition \ref{R_definition} and $R_{YS}^{\max}$ is defined in Lemma \ref{R_minimizer}, where we have $R_{YS}(\hat{Y}_f^*)\leq R_{YS}^{\max}$. For a particular value of $R_{YS}(\hat{Y}^*)$ the cost of mistrust is zero in example (a) and maximized in example (b).}
\label{mistrust_fig}
\end{figure}

\begin{table*}[]
\centering
\caption{Characteristics of the Adult and ProPublica datasets}
\label{characteristics}
\begin{tabular}{lp{2.5cm}p{2.5cm}cccccc}
\hline
 Dataset          & $Y=1$ definition              & $S=1$ definition             & $p(Y=1)$ & $p(S=1)$ & $p(Y=1|S=1)$ & $p(Y=1|S=0)$ & $SP(Y,S)$ & $DI(Y,S)$ \\
\hline
Adult      & Income\textgreater\$50,0000 & Gender male                & 0.243  & 0.671  & 0.308      & 0.111      & 0.197   & 0.639   \\
ProPublica & Reoffended within two years & Ethnicity African-American & 0.511  & 0.484  & 0.587      & 0.439      & 0.148   & 0.252  \\
\hline
\end{tabular}
\end{table*}

The problem of making accurate predictions of the target variable with a group fairness penalty has previously been posed as minimizing a risk $R_{YS}$ defined as the weighted difference of cost-sensitive risks for the target and sensitive variables \cite{menon2017cost}. 

\begin{definition}[Combined risk $R_{YS}$ for group fairness and target utility \cite{menon2017cost}]
\label{R_definition}
For a decision variable $\hat{Y}$, a target variable $Y$ and a sensitive variable $S$, let target variable risk have the form $R_{Y}(\hat{Y}):=$
\begin{equation*}
c_YP(Y=0)p(\hat{Y}=1|Y=0)+(1-c_Y)p(Y=1)p(\hat{Y}=0|Y=1)
\end{equation*}
and sensitive variable risk $R_{S}(\hat{Y}):=$
\begin{equation*}
c_SP(S=0)p(\hat{Y}=1|S=0)+(1-c_S)p(S=1)p(\hat{Y}=0|S=1)
\end{equation*}
 where $c_Y,c_S \in [0,1]$. Let 
\begin{equation*}
R_{YS}(\hat{Y}):=R_{Y}(\hat{Y})-\lambda R_{S}(\hat{Y})
\end{equation*}
where $\lambda$ is a non-negative constant. 
\end{definition}

We observe that $R_{YS}$ may be negative. Both $R_Y$ and $R_S$ are cost-sensitive risks. This form of $R_Y$, which differs from Definition \ref{original_risk_definition}, will be used for the remainder of the paper. Our choices of the parameters $c_Y$ and $c_S$ affect the evaluation of the decision variable. We saw in Theorem \ref{group_fairness_theorem_simplified} that balanced error rate, which is a special case of $R_S$, is closely related to statistical parity. Furthermore, \cite{menon2017cost} showed that $R_{S}$ is also related to disparate impact. 

The decision variable minimizing $R_{YS}$ was found in \cite{menon2017cost}. 

\begin{definition}[Minimizer of $R_{YS}$ \cite{menon2017cost}]
Let $\hat{Y}^*$ be the minimizer of $R_{YS}$ for all decisions made using $X$ as input and let $\hat{Y}_f^*$ be the minimizer of $R_{YS}$ for all decisions made using $X_f$ as input.
\end{definition}

\begin{lemma}[Analytic form of minimizer of $R_{YS}$ \cite{menon2017cost}]
\label{R_minimizer}
\begin{equation*}
\hat{Y}^*(x)=\mathbf{1}(p(Y=1|X=x)-c_Y>\lambda(p(S=1|X=x)-c_S)).
\end{equation*}
Furthermore $R_{YS}(\hat{Y}^*)=$
\begin{equation*}
\mathbb{E}_x[(c_Y-p(Y=1|X=x)-\lambda(c_S-p(S=1|X=x)))\hat{Y}^*(x)]+R_{YS}^{\max}
\end{equation*}
where 
\begin{equation*}
R_{YS}^{\max}:=(1-c_Y)p(Y=1)-\lambda(1-c_S)p(S=1).
\end{equation*}
\end{lemma}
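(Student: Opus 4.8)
The plan is to reduce both cost-sensitive risks to expectations over $X$, collect the terms that are linear in the per-point acceptance probability, read off the pointwise minimizer, and substitute back.

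First I would rewrite $R_Y$ and $R_S$ in terms of $p(Y=1|X=x)$, $p(S=1|X=x)$ and $q(x):=p(\hat{Y}=1|X=x)$. Since a decision made using $X$ as input is conditionally independent of $Y$ (and of $S$) given $X$, we have $P(Y=0)p(\hat{Y}=1|Y=0)=P(\hat{Y}=1,Y=0)=\mathbb{E}_x[q(x)(1-p(Y=1|X=x))]$ and $P(Y=1)p(\hat{Y}=0|Y=1)=\mathbb{E}_x[(1-q(x))p(Y=1|X=x)]$, with the analogous identities for $S$. Expanding and using the tower property $\mathbb{E}_x[p(Y=1|X=x)]=p(Y=1)$ gives $R_Y(\hat{Y})=\mathbb{E}_x[q(x)(c_Y-p(Y=1|X=x))]+(1-c_Y)p(Y=1)$ and likewise $R_S(\hat{Y})=\mathbb{E}_x[q(x)(c_S-p(S=1|X=x))]+(1-c_S)p(S=1)$. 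Forming $R_{YS}=R_Y-\lambda R_S$, the two constant terms combine into exactly $R_{YS}^{\max}=(1-c_Y)p(Y=1)-\lambda(1-c_S)p(S=1)$, leaving $R_{YS}(\hat{Y})=\mathbb{E}_x[q(x)\bigl(c_Y-p(Y=1|X=x)-\lambda(c_S-p(S=1|X=x))\bigr)]+R_{YS}^{\max}$.

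Second, since the integrand is affine in $q(x)\in[0,1]$ and the value of $q$ at distinct points is unconstrained, I would minimize under the expectation: for each $x$ the optimal choice is $q(x)=1$ when the bracketed coefficient is negative and $q(x)=0$ when it is positive, which rearranges to $q(x)=\mathbf{1}(p(Y=1|X=x)-c_Y>\lambda(p(S=1|X=x)-c_S))$, i.e. $\hat{Y}^*$ (the value where the coefficient vanishes is immaterial). Substituting $\hat{Y}^*$ back into the displayed expression for $R_{YS}$ gives the stated formula for $R_{YS}(\hat{Y}^*)$, and the same argument with $X_f$ in place of $X$ yields $\hat{Y}_f^*$.

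The main obstacle is justifying the interchange of the expectation with the pointwise minimization over all admissible decision rules: I would note that any rule using $X$ induces some measurable $q:\mathcal{X}\to[0,1]$, that $\hat{Y}^*$ is itself measurable (a threshold on the measurable map $x\mapsto p(Y=1|X=x)-\lambda p(S=1|X=x)$), and that it attains the pointwise infimum of the integrand for almost every $x$, hence is a global minimizer. The remaining manipulations are routine algebra, and since the statement is quoted from \cite{menon2017cost} one could alternatively cite it directly.
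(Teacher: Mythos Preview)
Your argument is correct. Note, however, that the paper does not actually prove this lemma: it is quoted directly from \cite{menon2017cost}, so there is no in-paper proof to compare against. Your derivation is precisely the standard one, and indeed the paper invokes exactly the decomposition you establish---$R_Y(\hat{Y})=(1-c_Y)p(Y=1)+\mathbb{E}_x[(c_Y-p(Y=1|X=x))\,p(\hat{Y}=1|X=x)]$ and its analogue for $R_S$---in the proof of Theorem~\ref{mistrust_theorem}, citing it there as ``Lemma 10 from \cite{menon2017cost}''. So your proposal supplies what the paper delegates to the reference, and you correctly flag at the end that one could simply cite it.
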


We showed that organizational governance reasons motivate a separation of concerns between a data producer who ensures fairness by producing cleaned data, and a data user learns the target task from the cleaned data. Compared to the minimizer of $R_{YS}$ computed in \cite{menon2017cost} which may be applied in the case of a single trusted party, we incur some excess risk which we refer to as the \textit{cost of mistrust}. 

\begin{definition}[Cost of mistrust]
\label{mistrust_definition}
Let the cost of mistrust be defined as $R_{YS}(\hat{Y}_f^*)-R_{YS}(\hat{Y}^*)$.
\end{definition}

We would like to quantify what this cost of mistrust is. The form of $R_{YS}$, along with the fact that the minimizer of $R_{YS}$ is a thresholding function and is therefore not individually fair as per Definition \ref{individual_fairness_definition}, means that we cannot apply Theorem \ref{accuracy_theorem}. We provide a different guarantee in Theorem \ref{mistrust_theorem} using smoothness properties of the target and sensitive variable distributions conditioned on the input, and the form of the minimizer of $R_{YS}$.

\begin{theorem}[Guarantee for cost of mistrust]
\label{mistrust_theorem}
Suppose for some subadditive function $d$ we have the Lipschitz conditions $\forall x,x'$,
\begin{equation*}
|p(Y=1|X=x)-p(Y=1|X=x')| \leq l_{Y}d(x,x')
\end{equation*}
and
\begin{equation*}
|p(S=1|X=x)-p(S=1|X=x')| \leq l_{S}d(x,x').
\end{equation*}
Then
\begin{equation*}
R_{YS}(\hat{Y}^*_f)-R_{YS}(\hat{Y}^*)\leq (l_{Y}+\lambda l_{S})\mathbb{E}_x[d(x,f(x))].
\end{equation*}
\end{theorem}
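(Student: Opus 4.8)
The plan is to express both risks through the minimizer formula of Lemma~\ref{R_minimizer}, reduce the cost of mistrust to an average of \emph{conditional covariances} over the level sets of $f$, and then bound each such covariance by a symmetrized argument in which the factor $\tfrac12$ of the covariance identity absorbs the factor $2$ that the triangle inequality would otherwise cost.

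Set $h(x):=p(Y=1\mid X=x)-c_Y-\lambda\bigl(p(S=1\mid X=x)-c_S\bigr)$. By Lemma~\ref{R_minimizer}, $\hat Y^*(x)=\mathbf{1}(h(x)>0)$ and $R_{YS}(\hat Y^*)=R_{YS}^{\max}-\mathbb{E}_x[h(x)\hat Y^*(x)]$. Applying the same lemma with $X_f$ in place of $X$, and writing $h_f(z):=p(Y=1\mid X_f=z)-c_Y-\lambda(p(S=1\mid X_f=z)-c_S)$, gives $R_{YS}(\hat Y_f^*)=R_{YS}^{\max}-\mathbb{E}_{X_f}[(h_f(X_f))_+]$ (expectation over $X_f$); moreover $h_f(z)=\mathbb{E}[h(X)\mid X_f=z]$ by the tower rule, since $X_f=f(X)$. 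First I would note that the pointwise bound $(h_f(z))_+\ge h_f(z)\,w$ for any $w\in[0,1]$, applied with $w(z):=\mathbb{E}[\hat Y^*(X)\mid X_f=z]$, yields $\mathbb{E}_{X_f}[(h_f(X_f))_+]\ge\mathbb{E}_{X_f}[h_f(X_f)\,w(X_f)]$; this just says the projection of $\hat Y^*$ onto $X_f$ is a legitimate $X_f$-decision, hence cannot beat $\hat Y_f^*$. Subtracting the two risk expressions, the $R_{YS}^{\max}$ terms cancel, and after applying the tower rule to $\mathbb{E}_x[h(x)\hat Y^*(x)]$ the right-hand side becomes the average over $X_f$ of $\mathbb{E}[h(X)\hat Y^*(X)\mid X_f]-\mathbb{E}[h(X)\mid X_f]\,\mathbb{E}[\hat Y^*(X)\mid X_f]$, i.e.
\begin{equation*}
R_{YS}(\hat Y_f^*)-R_{YS}(\hat Y^*)\ \le\ \mathbb{E}_{X_f}\!\bigl[\mathrm{Cov}\bigl(h(X),\hat Y^*(X)\mid X_f\bigr)\bigr].
\end{equation*}

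Next I would bound the conditional covariance on a level set $\{X_f=z\}$. Let $X,X'$ be conditionally i.i.d.\ given $X_f=z$; both lie in $f^{-1}(z)$, so $f(X)=f(X')=z$. Using $\mathrm{Cov}(U,V)=\tfrac12\mathbb{E}[(U-U')(V-V')]$ and $\hat Y^*\in\{0,1\}$ (so $|\hat Y^*(X)-\hat Y^*(X')|\le1$), each term satisfies $(h(X)-h(X'))(\hat Y^*(X)-\hat Y^*(X'))\le|h(X)-h(X')|\le(l_Y+\lambda l_S)\,d(X,X')$, where the last step sums the two Lipschitz hypotheses (recall $\lambda\ge0$). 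Since $f(X)=f(X')=z$, the triangle inequality and symmetry of $d$ give $d(X,X')\le d(X,z)+d(z,X')=d(X,f(X))+d(X',f(X'))$, hence $\mathbb{E}[d(X,X')\mid X_f=z]\le 2\,\mathbb{E}[d(X,f(X))\mid X_f=z]$. Combining, $\mathrm{Cov}(h(X),\hat Y^*(X)\mid X_f=z)\le\tfrac12(l_Y+\lambda l_S)\cdot 2\,\mathbb{E}[d(X,f(X))\mid X_f=z]$, and taking $\mathbb{E}_{X_f}$ and using the tower rule once more gives the claimed bound $(l_Y+\lambda l_S)\,\mathbb{E}_x[d(x,f(x))]$.

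The main obstacle is exactly the constant. The naive route---bounding $R_{YS}(\hat Y_f^*)-R_{YS}(\hat Y^*)\le\mathbb{E}_x\bigl|(h(x))_+-(h_f(f(x)))_+\bigr|\le\mathbb{E}_x|h(x)-h_f(f(x))|$, then conditional Jensen plus the within-level-set triangle inequality---pays $d(\cdot,f(\cdot))$ twice (once for $X$, once for its paired point) and only yields $2(l_Y+\lambda l_S)\,\mathbb{E}_x[d(x,f(x))]$. Rewriting the comparison as a conditional covariance and invoking $\mathrm{Cov}(U,V)=\tfrac12\mathbb{E}[(U-U')(V-V')]$ is what cancels the spurious factor of $2$. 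The only hypothesis used beyond the statement is that $d$ is a (pseudo)metric---symmetric and satisfying the triangle inequality---consistent with its description as a subadditive distance in Definition~\ref{individual_fairness_definition}.
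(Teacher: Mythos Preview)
Your argument is correct, but it takes a genuinely different route from the paper's. The paper does not compare $\hat Y_f^*$ to the randomized projection $w(z)=\mathbb{E}[\hat Y^*(X)\mid X_f=z]$; instead it uses the \emph{deterministic} plug-in $\hat Y_f(x):=\hat Y^*(f(x))$, which is also $X_f$-measurable and hence satisfies $R_{YS}(\hat Y_f^*)\le R_{YS}(\hat Y_f)$. Writing $R_{YS}(\hat Y_f)-R_{YS}(\hat Y^*)=\mathbb{E}_x\bigl[-h(x)\bigl(\hat Y^*(f(x))-\hat Y^*(x)\bigr)\bigr]$, the integrand vanishes unless $h(x)$ and $h(f(x))$ have opposite signs, in which case $|h(x)|\le|h(x)-h(f(x))|\le(l_Y+\lambda l_S)\,d(x,f(x))$ directly from the Lipschitz hypothesis. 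This sidesteps the whole covariance machinery: there is no pairing of two preimages, no triangle inequality for $d$, and no factor-of-two bookkeeping. (The paper actually phrases this last step via an intermediate point $x'$ with $h(x')=0$ and $d(x,x')\le d(x,f(x))$, an intermediate-value-type claim that tacitly needs continuity; the direct sign-change observation just stated removes that wrinkle.)

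What your approach buys is a clean probabilistic interpretation of the cost of mistrust as an average conditional covariance, and it would extend more naturally to randomized representations $f$. What the paper's route buys is brevity and a slightly weaker structural requirement on $d$: the plug-in argument only invokes the Lipschitz bound between $x$ and $f(x)$, whereas your step $d(X,X')\le d(X,f(X))+d(X',f(X'))$ genuinely needs symmetry and the triangle inequality. Your closing remark that the ``naive route'' loses a factor of $2$ is correct for the route you describe (Jensen on $|h-h_f|$), but note that the paper's plug-in route is a third option that also avoids the extra factor without any symmetrization trick.
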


Considering the general case without Lipschitz assumptions, the cost of mistrust may take on values in the range $[0,R_{YS}^{\max}-R_{YS}(\hat{Y}^*)]$ as shown in Figure \ref{mistrust_fig}. In example (a), $R_{YS}(\hat{Y}^*_f)-R_{YS}(\hat{Y}^*)=0$, so the minimum cost of mistrust (zero) is achieved. This is achieved if and only if $\hat{Y}^*$ agrees for all points $x$ mapped to the same value $f(x)$, where invertible maps $f$ are a special case. In example (b), $R_{YS}(\hat{Y}^*_f)-R_{YS}(\hat{Y}^*)=R_{YS}^{\max}-R_{YS}(\hat{Y}^*)$ so the maximum cost of mistrust is achieved for a given value of $R_{YS}(\hat{Y}^*)$.  Using the form of $R_{YS}(\hat{Y}_f^*)$ stated in Lemma \ref{R_minimizer}, this is achieved if and only if $P(Y=1|X_f=x)-c_Y\leq\lambda(P(S=1|X_f=x)-c_S)$ and hence $\hat{Y}^*_f(x)=0$ for all supported values $x \in X_f$.

\section{Practical approach}
\label{practical}
\begin{figure*}
\centering
\includegraphics[scale=0.5]{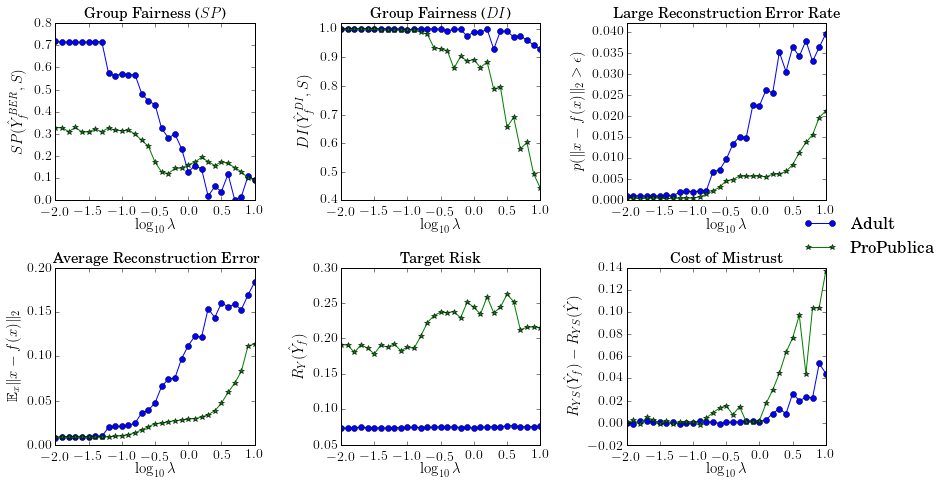}
\caption{Results of experiments on Adult and ProPublica datasets (lower is better on all plots).}
\label{results_fig}
\end{figure*}

Our theoretical results have shown us the desirable properties of a representation function $f$ in terms of both fairness and utility. In practice, we would like to use real-world data to construct such a representation function and estimate the performance measures identified in Section \ref{theory}.

\subsection{Learning a representation function}
\label{learn_representation}
We encode the desirable properties of a representation function $f$ in a cost function $J(f)$, shown in Definition \ref{cost_function}, which we would like to minimize.

\begin{definition}[Cost of representation function $f$]
\label{cost_function}
Let $f$ be function transforming input variable $X$ to $X_f$. Let $d$ be a function measuring reconstruction error. Let $\hat{S}_f$ be an estimate of sensitive variable $S$ using $X_f$ and let 
$J'(\hat{S}_f,S)$
be a sensitive variable cost function. The cost of $f$ is
\begin{equation*}
J(f):=\mathbb{E}_x[d(x,f(x))]-\lambda J'(\hat{S}_f,S)
\end{equation*}
where $\lambda$ is a non-negative constant.
\end{definition}

The reconstruction error term promotes target task utility and individual fairness,\footnote{Another variant would include a $p(d(x,f(x))>\epsilon)$ term to enforce individual fairness. We omit this term for simplicity, since the reconstruction error term should already promote individual fairness.} while the $J'$ term promotes group fairness since we would like to make it difficult to estimate $S$ from $X_f$. The parameter $\lambda$ controls the trade-off between the two terms.

We propose the following specific form of the cost function, which facilitates the optimization of $f$. We use Euclidean distance for the reconstruction error 
\begin{equation*}
d(x,f(x)):=\lVert x-f(x) \rVert_2
\end{equation*}
and cross-entropy for the sensitive variable cost function 
\begin{equation*}
J'(\hat{S}_f,S):=-\mathbb{E}_x[p(S=1|X=x)\log_2 p(\hat{S}_f=1|X=x)+
\end{equation*}
\begin{equation*}
p(S=0|X=x)\log_2 p(\hat{S}_f=0|X=x)].
\end{equation*} 

We estimate and attempt to minimize $J(f)$ over a finite training sample. We estimate $f$ using a fully-connected neural network with one softplus\footnote{The softplus activation function is softplus$(x):=\log_e(1+e^x)$.} hidden layer of 100 units and a linear output layer with the same number of units as the input layer. Hence $f$ is an autoencoder learned using a training objective given by $J$. We estimate $\hat{S}_f$ using a fully-connected neural network with one softplus hidden layer of 100 units and a single sigmoidal output unit. The output layer of $f$, which corresponds to the variable $X_f$, is the input layer for the network estimating $\hat{S}_f$. We alternate updates of the weights in the $f$ and $\hat{S}_f$ networks. The scheme is comparable to \cite{edwards_censoring_2015}, although we do not use the target variable $Y$ when learning $f$ since in general this may not be accessible to the data producer and is not required to learn a representation function with the properties we desire. We optimize the networks using 100 iterations of the Adam Optimizer with a learning rate of 0.0001 and a batch size of 100. We implemented the model in Python using the TensorFlow library. 

\subsection{Learning a decision variable}
\label{learn_decision}
The results presented in Section \ref{theory} show that it is possible to provide performance guarantees for a representation function for \textit{any} target task. In practice, we would like to understand how a decision variable constructed using cleaned input $X_f$ performs on the target task of interest. We are also interested to compare this to a decision variable constructed using the original input $X$.
 
Using Lemma \ref{R_minimizer}, we know the form of the minimizer of the combined risk $R_{YS}$ for group fairness and target utility. If we are only interested in the target variable risk $R_Y$, this equals $R_{YS}$ in the case $\lambda=0$. We produce decision variable $\hat{Y}_f$ by applying the analytic form of $\hat{Y}^*_f$ and separately training networks to estimate $p(Y|X_f)$ and $p(S|X_f)$. We perform this estimation by minimizing the cost functions $J'(\cdot,S)$ and $J'(\cdot,Y)$ respectively over the cleaned training set, in both cases using a fully-connected neural network with one softplus hidden layer of 100 units and a single sigmoidal output unit, and the same training settings previously described. In the same way we produce decision variable $\hat{Y}$ by applying the analytic form of $\hat{Y}^*$ and estimating $p(Y|X)$ and $p(S|X)$ using the original training set.

\subsection{Experiments}
We conducted experiments to test the representation learning approach described in Section \ref{learn_representation} and to illustrate how our theoretical results suggest useful performance measures for the learned representation function $f$. We used two datasets that are well-known in the fair machine learning literature (for example in \cite{calmon2017optimized}), the UCI Adult and ProPublica recividism datasets.\footnote{Available at \url{https://archive.ics.uci.edu/ml/datasets/adult} and \url{https://github.com/propublica/compas-analysis} respectively.} In both datasets we used a training set of 70\% of the data and test on the remaining 30\%. These datasets are summarized in Table \ref{characteristics}, including statistics of interest computed for the training set. 

The Adult dataset contains financial and demographic information compiled from a census about 32561 people and contains 110 input columns once categorical features are binarized. We selected the sensitive variable as gender and the target variable as whether the person's income is at least \$50,000. This setting is similar to a situation where a financial institution makes an algorithmic decision about whether to grant an individual a loan based on a prediction of their income. The training set estimates of $SP(Y,S)$ and $DI(Y,S)$ indicate that there is a clear relationship between the sensitive and target variables.

The ProPublica dataset contains information about 7214 criminal offences committed in Broward County, Florida and contains 79 input columns once categorical features are binarized.\footnote{We processed the free text crime description column by converting it to a categorical variable where descriptions occurring at least 20 times have their own category (covering 82.9\% of all offences) and all other descriptions are marked as `other'. The categorical variable is then binarized.} We selected the sensitive variable as whether the person is of African-American ethnicity and the target variable as whether the person reoffended within two years. This setting is similar to a situation where sentencing decisions are made based on an algorithmic assessment of the individual's likelihood of reoffending, as in the case of the COMPAS tool \cite{compas}. The training set estimates of $SP(Y,S)$ and $DI(Y,S)$ indicate that there is a relationship between the sensitive and target variables, but weaker than in the case of the Adult dataset.

We present the results of our experiments in Figure \ref{results_fig} measured over the test set. We plot a series of performance measures of interest for both datasets, varying the parameter $\lambda$ which controls the extent to which group fairness is considered in the training objective for learning $f$. For all measures, lower is better.

The top row contains measures of group and individual fairness. We compute the statistical parity $SP$ and normalized disparate impact $DI$ using estimates of $\hat{Y}^{BER}_f$ and $\hat{Y}^{DI}_f$ respectively (see Theorems \ref{group_fairness_theorem_simplified} and \ref{DI_minimiser}). For both datasets, these two measures decrease as $\lambda$ increases, which provides evidence that group fairness is improving. We measure the large reconstruction error rate $p(\lVert x - f(x) \rVert_2 > \epsilon)$, which is of interest due to its relationship with individual fairness established in Theorem \ref{individual_fairness_theorem}. A smaller value of this quantity means that greater individual fairness is possible using $X_f$. We select $\epsilon$ to be $\frac{1}{10}$ of the average training set value of $\lVert x \rVert _2$. The results show that there is a trade-off between group and individual fairness, with $p(\lVert x - f(x) \rVert_2 > \epsilon)$ increasing and hence individual fairness becoming worse as $\lambda$ increases.

The bottom row contains measures of target task utility. We show the average reconstruction error, which is related to a particular form of target task risk as shown in Theorem \ref{accuracy_theorem}, which increases with $\lambda$ since the impact of protecting the sensitive variable has a greater influence on $f$. We estimate the target cost-sensitive risk $R_Y$ (see Definition \ref{R_definition}) with $c_Y=\frac{1}{2}$ using the decision variable $\hat{Y}_f$ described in Section \ref{learn_decision}. Interestingly, this risk only slightly increases with $\lambda$ for both datasets, and for the Adult dataset the effect is particularly small. This indicates that removing knowledge of $S$ may only slightly affect target variable prediction performance if other factors in the input data that are not correlated with $S$ are predictive of $Y$. We also estimate the cost of mistrust with parameters $c_S=c_Y=\frac{1}{2}$ (see Definitions \ref{R_definition} and \ref{mistrust_definition}) using the decision variables $\hat{Y}_f$ and $\hat{Y}$ described in Section \ref{learn_decision}, which increases with $\lambda$ as $X_f$ differs more from $X$.

\section{Conclusion and future work}
\label{conclusion}
We proposed learning a representation function to prepare a cleaned version of the input data made available to a decision-making data user. We showed that this approach improves group fairness using common measures such as statistical parity and disparate impact. Furthermore, we showed that it is possible to quantify the cost of such a representation function on individual fairness and target task utility. We also evaluated the \textit{cost of mistrust}, by comparing performance using our approach with a baseline scenario of a trusted data user given access to the original input data and sensitive variable at training time.

We used our results to motivate a practical algorithm that learns representation functions to achieve both fairness and target task utility. Using real-world datasets in financial and criminal justice domains, we found that increasing the fairness parameter $\lambda$ controls the trade-off between group fairness on the one hand, and individual fairness and target task utility on the other. The results highlight the value of the conceptual distinction between group and individual fairness. They also indicate that when the target variable can be predicted using other factors not correlated with the sensitive variable, group fairness can be achieved without too great a cost to target task utility.

Promising future directions include exploring other individual similarity measures besides Euclidean distance and analyzing the ability to extend theoretical guarantees to a finite sample regime. From a regulatory perspective it is interesting to more closely consider cases where the governance model and separation of concerns we propose would be suitable in practice. Protecting the anonymity of user records is closely related to the task of protecting a sensitive variable, indicating the potential to use analysis developed in the context of fairness to privacy applications.

\section{Proofs of theoretical guarantees}
\label{proofs}

We present proofs of the theoretical results stated in Section \ref{theory}.
\vskip -0.3in
\begin{proof}[Proof of Theorem \ref{group_fairness_theorem_simplified}]
$ $

$SP(\hat{Y}_f,S)$

$=p(\hat{Y}=1|S=1)-p(\hat{Y}=1|S=0)$

$=1-p(\hat{Y}=0|S=1)-p(\hat{Y}=1|S=0)$

$= 1-2BER(\hat{Y}_f,S)$

$\leq 1-2BER(\hat{Y}^{BER}_f,S)$

since $\hat{Y}_f^{BER}$ minimizes $BER$ among classifiers receiving $X_f$ as input \cite{zhao_beyond_2013}.
\end{proof}

\begin{proof}[Proof of Theorem \ref{group_fairness_theorem}]
As shown in the proof of Theorem \ref{group_fairness_theorem_simplified}, 

\begin{equation*}
SP(\hat{Y}_f,S)\leq 1-2BER(\hat{Y}^{BER}_f,S).
\end{equation*}

Therefore it is sufficient to show

\begin{equation*}
2BER(\hat{Y}^{BER}_f,S)\geq \frac{H_b^{-1}(H(S|X_f))}{\max(p(S=1),p(S=0))}.
\end{equation*} 

It is helpful to observe that $BER(\hat{Y}^{BER}_f,S)\leq\frac{1}{2}$. We consider the cases (a) $p(S=1)\leq\frac{1}{2}$ and (b) $p(S=1)>\frac{1}{2}$.

(a) First consider the case $p(S=1)\leq\frac{1}{2}$. If in addition 
\begin{equation*}
p(S=0)2BER(\hat{Y}^{BER}_f,S)\geq\frac{1}{2}
\end{equation*}

we have 

$2BER(\hat{Y}^{BER}_f,S)$

$\geq\frac{1}{2p(S=0)}$

$\geq\frac{1}{2\max(p(S=1),p(S=0))}$

$\geq \frac{H_b^{-1}(H(S|X_f))}{\max(p(S=1),p(S=0))}$. 

Otherwise, using Corollary 8 from \cite{zhao_beyond_2013}
\begin{equation*}
H(S|\hat{Y}_f)\leq H_b(p(S=0)2BER(\hat{Y}^{BER}_f,S)).
\end{equation*}

Therefore 

$2BER(\hat{Y}^{BER}_f,S)$

$\geq \frac{H_b^{-1}(H(S|\hat{Y}_f))}{p(S=0)}$

$\geq\frac{H_b^{-1}(H(S|\hat{Y}_f))}{\max(p(S=1),p(S=0))}$ 

where for the first inequality we used the fact that $H_b^{-1}$ is a non-decreasing function.

(b) The case $p(S=1)>\frac{1}{2}$ is similar. If in addition 

\begin{equation*}
p(S=1)2BER(\hat{Y}^{BER}_f,S)\geq\frac{1}{2}
\end{equation*}

we have 

$2BER(\hat{Y}^{BER}_f,S)$

$\geq\frac{1}{2p(S=1)}$

$\geq\frac{1}{2\max(p(S=1),p(S=0))}$

$\geq \frac{H_b^{-1}(H(S|X_f))}{\max(p(S=1),p(S=0))}$. 

Otherwise, again using Corollary 8 from \cite{zhao_beyond_2013}, 

\begin{equation*}
H(S|\hat{Y}_f)\leq H_b(p(S=1)2BER(\hat{Y}^{BER}_f,S)).
\end{equation*} 

Therefore 

$2BER(\hat{Y}^{BER}_f,S)$

$\geq \frac{H_b^{-1}(H(S|\hat{Y}_f))}{p(S=1)}$

$\geq\frac{H_b^{-1}(H(S|\hat{Y}_f))}{\max(p(S=1),p(S=0))}$.

In cases (a) and (b) we have established

\begin{equation*}
2BER(\hat{Y}^{BER}_f,S)\geq\frac{H_b^{-1}(H(S|\hat{Y}_f))}{\max(p(S=1),p(S=0))}.
\end{equation*}

Furthermore, since $\hat{Y}_f$ is a function of $X_f$, $S \to X_f \to \hat{Y}_f$ is a Markov chain \cite{cover2012elements}. Therefore we may apply the Data Processing Inequality \cite{cover2012elements} to yield $H(S|X_f)\leq H(S|\hat{Y}_f)$. Again using the fact that $H_b^{-1}$ is non-decreasing, we have

\begin{equation*}
2BER(\hat{Y}^{BER}_f,S)\geq \frac{H_b^{-1}(H(S|X_f))}{\max(p(S=1),p(S=0))}
\end{equation*}

as required.
\end{proof}

\begin{proof}[Proof of Theorem \ref{DI_minimiser}]
Observe that if $p(\hat{Y}_f=1,S=1)=0$ then $DI$ is not defined. So we assume $p(\hat{Y}_f=1,S=1)>0$. In the steps below we assume $X$ is continuous but in the case $X$ is discrete we may write the same steps replacing integrals with summations.

$DI(\hat{Y}_f,S)$

$=1-\frac{p(\hat{Y}_f=1|S=0)}{p(\hat{Y}_f=1|S=1)}$

by definition

$=1-\frac{\int_xp(X=x|S=0)p(\hat{Y}_f=1|S=0,X=x)dx}{\int_xp(X=x|S=1)p(\hat{Y}_f=1|S=1,X=x)dx}$

law of total expectation

$=1-\frac{\int_xp(X=x|S=0)p(\hat{Y}_f=1|X=x)dx}{\int_xp(X=x|S=1)p(\hat{Y}_f=1|X=x)dx}$

$\hat{Y}_f$ and $S$ are conditionally independent given $X$

$=1-\frac{p(S=1)\int_xp(X=x)p(S=0|X=x)p(\hat{Y}_f=1|X=x)dx}{p(S=0)\int_xp(X=x)p(S=1|X=x)p(\hat{Y}_f=1|X=x)dx}$

Bayes rule

$\leq1-\frac{p(S=1)\int_xp(X=x)(1-\eta_f)p(\hat{Y}_f=1|X=x)dx}{p(S=0)\int_xp(X=x)p(S=1|X=x)p(\hat{Y}_f=1|X=x)dx}$

Since $\eta_f\geq p(S=1|X=x)$, with equality when $\hat{Y}_f=\hat{Y}_f^{DI}$

$\leq1-\frac{p(S=1)\int_xp(X=x)(1-\eta_f)p(\hat{Y}_f=1|X=x)dx}{p(S=0)\int_xp(X=x)\eta_f p(\hat{Y}_f=1|X=x)dx}$

Again since $\eta_f\geq p(S=1|X=x)$, with equality when $\hat{Y}_f=\hat{Y}_f^{DI}$

$=1-\frac{p(S=1)(1-\eta_f)}{p(S=0)\eta_f}.$

Simplifying
\end{proof}

\begin{proof}[Proof of Theorem \ref{individual_fairness_theorem}]
Consider points $x$ and $x'$ drawn independently at random using the input variable $X$.

$p(\hat{Y}_f=1|X=x)$

$=p(\hat{Y}_f=1,X_f=f(x)|X=x)+p(\hat{Y}_f=1,X_f \neq f(x)|X=x)$

law of total probability

$=p(\hat{Y}_f=1,X_f=f(x)|X=x)$

$f$ is deterministic

$=p(X_f=f(x)|X=x)p(\hat{Y_f}=1|X_f=f(x),X=x)$

definition of conditional probability

$=p(\hat{Y_f}=1|X_f=f(x),X=x)$

from definition of $X_f$

$=p(\hat{Y_f}=1|X_f=f(x))$

$\hat{Y}_f$ is conditionally independent of $X$ given $X_f$

$=p(\hat{Y}=1|X=f(x))$.

from definition of $\hat{Y}_f$

Similarly $p(\hat{Y}_f=1|X=x')=p(\hat{Y}=1|X=f(x'))$.

With probability at least $1-\delta$, $d(x,f(x))\leq \epsilon$. Similarly, with probability at least $1-\delta$, $d(x',f(x'))\leq \epsilon$. By the union bound, both statements hold with probability at least $1-2\delta$.

Having established these facts, we prove the result:

$D(p(\hat{Y}_f=1|X=x),p(\hat{Y}_f=1|X=x'))$

$=D(p(\hat{Y}=1|X=f(x)),p(\hat{Y}=1|X=f(x')))$

using the first result established above

$\leq D(p(\hat{Y}=1|X=f(x)),p(\hat{Y}=1|X=x))+$

{\raggedleft $\quad \quad D(p(\hat{Y}=1|X=x),p(\hat{Y}=1|X=f(x')))$}

triangle inequality

$\leq \epsilon+D(p(\hat{Y}=1|X=x),p(\hat{Y}=1|X=f(x')))$

Using the high probability bound above plus the fact that $\hat{Y}$ is individually fair with respect to $X$.

$\leq \epsilon+D(p(\hat{Y}=1|X=x),p(\hat{Y}=1|X=x'))+$

{\raggedleft $\quad \quad D(p(\hat{Y}=1|X=x'),p(\hat{Y}=1|X=f(x')))$}

Using the triangle inequality

$\leq 2\epsilon+d(x,x')$.

Once again using the high probability bound above plus the fact that $\hat{Y}$ is individually fair with respect to $X$.

We have thus established $IU_{D,d_\epsilon}(\hat{Y}_f,X)\leq 2\delta$ as required.
\end{proof}

\begin{proof}[Proof of Theorem \ref{accuracy_theorem}]

$ $

$R_Y(\hat{Y}_f)$

$=\mathbb{E}_x[D(p(\hat{Y}_f=1|X=x),p(Y=1|X=x))]$

$\leq\mathbb{E}_x[D(p(\hat{Y}_f=1|X=x),p(\hat{Y}=1|X=x))+$

{\raggedleft \quad \quad $D(p(\hat{Y}=1|X=x),p(Y=1|X=x))]$}

triangle inequality

$=\mathbb{E}_x[D(p(\hat{Y}=1|X=f(x)),p(\hat{Y}=1|X=x))+$

{\raggedleft \quad \quad $D(p(\hat{Y}=1|X=x),p(Y=1|X=x))]$}

$p(\hat{Y}_f=1|X=x)=p(\hat{Y}=1|X=f(x))$ shown in the proof of Theorem \ref{individual_fairness_theorem}

$=\mathbb{E}_x[D(p(\hat{Y}=1|X=f(x)),p(\hat{Y}=1|X=x))]+R_Y(\hat{Y})$

by linearity of expectations

$\leq\mathbb{E}_x[d(x,f(x))]+R_Y(\hat{Y})$.

since $Y$ is individually fair with respect to $X$
\end{proof}

\begin{proof}[Proof of Theorem \ref{mistrust_theorem}]

$ $

$R_{YS}(\hat{Y}_f^*)- R_{YS}(\hat{Y}^*)$

$\leq R_{YS}(\hat{Y}_f)- R_{YS}(\hat{Y}^*)$

using the definition $\hat{Y}_f(x):=\hat{Y}^*(f(x))$

$=R_{Y}(\hat{Y}_f)- R_{Y}(\hat{Y}^*)-\lambda (R_{S}(\hat{Y}_f)- R_{S}(\hat{Y}^*))$

Definition of $R_{YS}$

$=\mathbb{E}_x[(c_Y-p(Y=1|X=x))(\hat{Y}_f(x)-\hat{Y}^*(x))]-$

{\raggedleft \quad \quad $\lambda\mathbb{E}_x[(c_S-p(S=1|X=x))(\hat{Y}_f(x)-\hat{Y}^*(x))]$}

We may write 
\begin{equation*}
R_{Y}(\cdot)=(1-c)p(Y=1)+\mathbb{E}_x[(c_Y-p(Y=1|X=x))p(\cdot=1|X=x)]
\end{equation*}
and 
\begin{equation*}
R_{S}(\cdot)=(1-c_S)p(S=1)+\mathbb{E}_x[(c_S-p(S=1|X=x))p(\cdot=1|X=x)]
\end{equation*}
(Lemma 10 from \cite{menon2017cost}). Then we apply linearity of expectation.

$=\mathbb{E}_x[(c_Y-p(Y=1|X=x)-$

{\raggedleft \quad \quad $\lambda(c_S-p(S=1|X=x)))(\hat{Y}_f(x)-\hat{Y}^*(x))]$}

Factorizing, linearity of expectation

$\leq\mathbb{E}_x[(l_{Y}+\lambda l_{S})d(x,f(x))]$

See below for discussion

$=(l_{Y}+\lambda l_{S})\mathbb{E}_x[d(x,f(x))]$.

factorizing

Discussion of second last line: consider the minimizer of $R_{YS}$ from Lemma \ref{R_minimizer}. For any $x$ such that $\hat{Y}_f(x)\neq\hat{Y}^*(x)$, there must be some point $x'$ such that 
\begin{equation*}
c_Y-p(Y=1|X=x')-\lambda(c_S-p(S=1|X=x'))=0
\end{equation*}

and $d(x,x')\leq d(x,f(x))$. Hence, applying the Lipschitz conditions we have:

$c_Y-p(Y=1|X=x)-\lambda(c_S-p(S=1|X=x))$

$\leq c_Y-p(Y=1|X=x')+l_{Y}d(x,x')-$

{\raggedleft \quad \quad $\lambda(c_S-p(S=1|X=x')-l_{S}d(x,x'))$}

$ = (l_{Y}+\lambda l_{S})d(x,x')$

$ \leq (l_{Y}+\lambda l_{S})d(x,f(x))$.
\end{proof}

\bibliographystyle{ACM-Reference-Format}
\bibliography{paper} 


\begin{thebibliography}{20}


\ifx \showCODEN    \undefined \def \showCODEN     #1{\unskip}     \fi
\ifx \showDOI      \undefined \def \showDOI       #1{#1}\fi
\ifx \showISBNx    \undefined \def \showISBNx     #1{\unskip}     \fi
\ifx \showISBNxiii \undefined \def \showISBNxiii  #1{\unskip}     \fi
\ifx \showISSN     \undefined \def \showISSN      #1{\unskip}     \fi
\ifx \showLCCN     \undefined \def \showLCCN      #1{\unskip}     \fi
\ifx \shownote     \undefined \def \shownote      #1{#1}          \fi
\ifx \showarticletitle \undefined \def \showarticletitle #1{#1}   \fi
\ifx \showURL      \undefined \def \showURL       {\relax}        \fi
\providecommand\bibfield[2]{#2}
\providecommand\bibinfo[2]{#2}
\providecommand\natexlab[1]{#1}
\providecommand\showeprint[2][]{arXiv:#2}

\bibitem[\protect\citeauthoryear{Aggarwal and Philip}{Aggarwal and
  Philip}{2008}]%
        {aggarwal2008general}
\bibfield{author}{\bibinfo{person}{Charu~C Aggarwal} {and}
  \bibinfo{person}{S~Yu Philip}.} \bibinfo{year}{2008}\natexlab{}.
\newblock \showarticletitle{A General Survey of Privacy-Preserving Data Mining
  Models and Algorithms}.
\newblock In \bibinfo{booktitle}{{\em Privacy-Preserving Data Mining}}.
  \bibinfo{pages}{11--52}.
\newblock


\bibitem[\protect\citeauthoryear{Beutel, Chen, Zhao, and Chi}{Beutel
  et~al\mbox{.}}{2017}]%
        {beutel2017data}
\bibfield{author}{\bibinfo{person}{Alex Beutel}, \bibinfo{person}{Jilin Chen},
  \bibinfo{person}{Zhe Zhao}, {and} \bibinfo{person}{Ed~H Chi}.}
  \bibinfo{year}{2017}\natexlab{}.
\newblock \showarticletitle{Data Decisions and Theoretical Implications when
  Adversarially Learning Fair Representations}.
\newblock \bibinfo{journal}{{\em Proceedings of 2017 Workshop on Fairness,
  Accountability, and Transparency in Machine Learning\/}}
  (\bibinfo{year}{2017}).
\newblock


\bibitem[\protect\citeauthoryear{Calmon, Wei, Ramamurthy, and Varshney}{Calmon
  et~al\mbox{.}}{2017}]%
        {calmon2017optimized}
\bibfield{author}{\bibinfo{person}{Flavio~P Calmon}, \bibinfo{person}{Dennis
  Wei}, \bibinfo{person}{Karthikeyan~Natesan Ramamurthy}, {and}
  \bibinfo{person}{Kush~R Varshney}.} \bibinfo{year}{2017}\natexlab{}.
\newblock \showarticletitle{Optimized Data Pre-Processing for Discrimination
  Prevention}.
\newblock \bibinfo{journal}{{\em arXiv preprint arXiv:1704.03354\/}}
  (\bibinfo{year}{2017}).
\newblock


\bibitem[\protect\citeauthoryear{Commission}{Commission}{1978}]%
        {eeoc}
\bibfield{author}{\bibinfo{person}{United States Equal Opportunity~Employment
  Commission}.} \bibinfo{year}{1978}\natexlab{}.
\newblock \showarticletitle{Uniform Guidelines on Employee Selection
  Procedures}.
\newblock  (\bibinfo{year}{1978}).
\newblock


\bibitem[\protect\citeauthoryear{Cover and Thomas}{Cover and Thomas}{2012}]%
        {cover2012elements}
\bibfield{author}{\bibinfo{person}{Thomas~M Cover} {and} \bibinfo{person}{Joy~A
  Thomas}.} \bibinfo{year}{2012}\natexlab{}.
\newblock \bibinfo{booktitle}{{\em Elements of Information Theory}}.
\newblock


\bibitem[\protect\citeauthoryear{Datta, Tschantz, and Datta}{Datta
  et~al\mbox{.}}{2015}]%
        {datta_automated_2015}
\bibfield{author}{\bibinfo{person}{Amit Datta}, \bibinfo{person}{Michael~Carl
  Tschantz}, {and} \bibinfo{person}{Anupam Datta}.}
  \bibinfo{year}{2015}\natexlab{}.
\newblock \showarticletitle{Automated {Experiments} on {Ad} {Privacy}
  {Settings}}.
\newblock \bibinfo{journal}{{\em Proceedings on Privacy Enhancing
  Technologies\/}} \bibinfo{number}{1} (\bibinfo{year}{2015}),
  \bibinfo{pages}{92--112}.
\newblock
\showISSN{2299-0984}


\bibitem[\protect\citeauthoryear{Dwork, Hardt, Pitassi, Reingold, and
  Zemel}{Dwork et~al\mbox{.}}{2012}]%
        {dwork_fairness_2012}
\bibfield{author}{\bibinfo{person}{Cynthia Dwork}, \bibinfo{person}{Moritz
  Hardt}, \bibinfo{person}{Toniann Pitassi}, \bibinfo{person}{Omer Reingold},
  {and} \bibinfo{person}{Richard Zemel}.} \bibinfo{year}{2012}\natexlab{}.
\newblock \showarticletitle{Fairness {Through} {Awareness}}. In
  \bibinfo{booktitle}{{\em Proceedings of the 3rd {Innovations} in
  {Theoretical} {Computer} {Science} {Conference}}}. \bibinfo{pages}{214--226}.
\newblock
\showISBNx{978-1-4503-1115-1}


\bibitem[\protect\citeauthoryear{Edwards and Storkey}{Edwards and
  Storkey}{2016}]%
        {edwards_censoring_2015}
\bibfield{author}{\bibinfo{person}{Harrison Edwards} {and}
  \bibinfo{person}{Amos Storkey}.} \bibinfo{year}{2016}\natexlab{}.
\newblock \showarticletitle{Censoring {Representations} with an {Adversary}}.
\newblock \bibinfo{journal}{{\em International Conference on Learning
  Representations\/}} (\bibinfo{year}{2016}).
\newblock


\bibitem[\protect\citeauthoryear{Feldman, Friedler, Moeller, Scheidegger, and
  Venkatasubramanian}{Feldman et~al\mbox{.}}{2015}]%
        {feldman2015certifying}
\bibfield{author}{\bibinfo{person}{Michael Feldman}, \bibinfo{person}{Sorelle~A
  Friedler}, \bibinfo{person}{John Moeller}, \bibinfo{person}{Carlos
  Scheidegger}, {and} \bibinfo{person}{Suresh Venkatasubramanian}.}
  \bibinfo{year}{2015}\natexlab{}.
\newblock \showarticletitle{Certifying and Removing Disparate Impact}. In
  \bibinfo{booktitle}{{\em Proceedings of the 21th ACM SIGKDD International
  Conference on Knowledge Discovery and Data Mining}}.
  \bibinfo{pages}{259--268}.
\newblock


\bibitem[\protect\citeauthoryear{Goodman and Flaxman}{Goodman and
  Flaxman}{2016}]%
        {goodman2016eu}
\bibfield{author}{\bibinfo{person}{Bryce Goodman} {and} \bibinfo{person}{Seth
  Flaxman}.} \bibinfo{year}{2016}\natexlab{}.
\newblock \showarticletitle{EU Regulations on Algorithmic Decision-Making and a
  `Right to Explanation'}. In \bibinfo{booktitle}{{\em ICML Workshop on Human
  Interpretability in Machine Learning}}.
\newblock


\bibitem[\protect\citeauthoryear{Hardt, Price, and Srebro}{Hardt
  et~al\mbox{.}}{2016}]%
        {hardt_equality_2016}
\bibfield{author}{\bibinfo{person}{Moritz Hardt}, \bibinfo{person}{Eric Price},
  {and} \bibinfo{person}{Nati Srebro}.} \bibinfo{year}{2016}\natexlab{}.
\newblock \showarticletitle{Equality of {Opportunity} in {Supervised}
  {Learning}}.
\newblock In \bibinfo{booktitle}{{\em Advances in {Neural} {Information}
  {Processing} {Systems} 29}}. \bibinfo{pages}{3315--3323}.
\newblock


\bibitem[\protect\citeauthoryear{Inc.}{Inc.}{2012}]%
        {compas}
\bibfield{author}{\bibinfo{person}{Northpointe Inc.}}
  \bibinfo{year}{2012}\natexlab{}.
\newblock \showarticletitle{Practitioners Guide to {C}{O}{M}{P}{A}{S}}.
\newblock  (\bibinfo{year}{2012}).
\newblock
\showURL{%
\url{http://www.northpointeinc.com/files/technical_documents/FieldGuide2_081412.pdf}}


\bibitem[\protect\citeauthoryear{Johndrow and Lum}{Johndrow and Lum}{2017}]%
        {johndrow2017algorithm}
\bibfield{author}{\bibinfo{person}{James Johndrow} {and}
  \bibinfo{person}{Kristian Lum}.} \bibinfo{year}{2017}\natexlab{}.
\newblock \showarticletitle{An Algorithm for Removing Sensitive Information:
  Application to Race-Independent Recidivism Prediction}.
\newblock \bibinfo{journal}{{\em arXiv preprint arXiv:1703.04957\/}}
  (\bibinfo{year}{2017}).
\newblock


\bibitem[\protect\citeauthoryear{Louizos, Swersky, Li, Zemel, and
  Welling}{Louizos et~al\mbox{.}}{2016}]%
        {louizos_variational_2015}
\bibfield{author}{\bibinfo{person}{Christos Louizos}, \bibinfo{person}{Kevin
  Swersky}, \bibinfo{person}{Yujia Li}, \bibinfo{person}{Richard Zemel}, {and}
  \bibinfo{person}{Max Welling}.} \bibinfo{year}{2016}\natexlab{}.
\newblock \showarticletitle{The {Variational} {Fair} {Autoencoder}}.
\newblock \bibinfo{journal}{{\em International Conference on Learning
  Representations\/}} (\bibinfo{year}{2016}).
\newblock


\bibitem[\protect\citeauthoryear{Menon and Williamson}{Menon and
  Williamson}{2017}]%
        {menon2017cost}
\bibfield{author}{\bibinfo{person}{Aditya Menon} {and} \bibinfo{person}{Robert
  Williamson}.} \bibinfo{year}{2017}\natexlab{}.
\newblock \showarticletitle{The cost of fairness in classification}.
\newblock \bibinfo{journal}{{\em arXiv preprint arXiv:1705.09055\/}}
  (\bibinfo{year}{2017}).
\newblock


\bibitem[\protect\citeauthoryear{Morrison}{Morrison}{2014}]%
        {morrison2014social}
\bibfield{author}{\bibinfo{person}{John Morrison}.}
  \bibinfo{year}{2014}\natexlab{}.
\newblock \bibinfo{booktitle}{{\em The Social License: How to Keep Your
  Organization Legitimate}}.
\newblock


\bibitem[\protect\citeauthoryear{Zafar, Valera, Gomez~Rodriguez, and
  Gummadi}{Zafar et~al\mbox{.}}{2017}]%
        {zafar2017fairness}
\bibfield{author}{\bibinfo{person}{Muhammad~Bilal Zafar},
  \bibinfo{person}{Isabel Valera}, \bibinfo{person}{Manuel Gomez~Rodriguez},
  {and} \bibinfo{person}{Krishna~P Gummadi}.} \bibinfo{year}{2017}\natexlab{}.
\newblock \showarticletitle{Fairness Beyond Disparate Treatment \& Disparate
  Impact: Learning Classification without Disparate Mistreatment}. In
  \bibinfo{booktitle}{{\em Proceedings of the 26th International Conference on
  World Wide Web}}. \bibinfo{pages}{1171--1180}.
\newblock


\bibitem[\protect\citeauthoryear{Zemel, Wu, Swersky, Pitassi, and Dwork}{Zemel
  et~al\mbox{.}}{2013}]%
        {zemel_learning_2013}
\bibfield{author}{\bibinfo{person}{Rich Zemel}, \bibinfo{person}{Yu Wu},
  \bibinfo{person}{Kevin Swersky}, \bibinfo{person}{Toni Pitassi}, {and}
  \bibinfo{person}{Cynthia Dwork}.} \bibinfo{year}{2013}\natexlab{}.
\newblock \showarticletitle{Learning Fair Representations}. In
  \bibinfo{booktitle}{{\em Proceedings of the 30th {International} {Conference}
  on {Machine} {Learning}}}. \bibinfo{pages}{325--333}.
\newblock


\bibitem[\protect\citeauthoryear{Zhao, Edakunni, Pocock, and Brown}{Zhao
  et~al\mbox{.}}{2013}]%
        {zhao_beyond_2013}
\bibfield{author}{\bibinfo{person}{Ming-Jie Zhao}, \bibinfo{person}{Narayanan
  Edakunni}, \bibinfo{person}{Adam Pocock}, {and} \bibinfo{person}{Gavin
  Brown}.} \bibinfo{year}{2013}\natexlab{}.
\newblock \showarticletitle{Beyond {Fano}'s Inequality: Bounds on the Optimal
  {F}-score, {BER}, and Cost-Sensitive Risk and their Implications}.
\newblock \bibinfo{journal}{{\em The Journal of Machine Learning Research\/}}
  \bibinfo{volume}{14}, \bibinfo{number}{1} (\bibinfo{year}{2013}),
  \bibinfo{pages}{1033--1090}.
\newblock


\bibitem[\protect\citeauthoryear{Zliobaite}{Zliobaite}{2015}]%
        {zliobaite_survey_2015}
\bibfield{author}{\bibinfo{person}{Indre Zliobaite}.}
  \bibinfo{year}{2015}\natexlab{}.
\newblock \showarticletitle{A Survey on Measuring Indirect Discrimination in
  Machine Learning}.
\newblock \bibinfo{journal}{{\em arXiv preprint arXiv:1511.00148\/}}
  (\bibinfo{year}{2015}).
\newblock


\end{thebibliography}

\end{document}